\newtheorem{theorem}{Theorem}
\newtheorem{corollary}{Corollary}
\newtheorem{lemma}{Lemma}
\theoremstyle{definition}
\newtheorem{definition}{Definition}
\theoremstyle{property}
\newtheorem{property}{Property}
\theoremstyle{remark}
\newtheorem*{remark}{Remark}
\title{ATLAS: Universal Function Approximator for Memory Retention}
\author{%
    Heinrich P.~van Deventer\thanks{\url{https://github.com/hpdeventer}} \\
  Department of Computer Science\\
  University of Pretoria\\
  \texttt{HPDeventer@gmail.com} \\
  \And
  Anna S.~Bosman\thanks{\url{https://annabosman.github.io/}} \\
  Department of Computer Science\\
  University of Pretoria\\
  \texttt{anna.bosman@up.ac.za} \\
}
\begin{document}

\maketitle

\begin{abstract}

Artificial neural networks (ANNs), despite their universal function approximation capability and practical success, are subject to catastrophic forgetting. Catastrophic forgetting refers to the abrupt unlearning of a previous task when a new task is learned. It is an emergent phenomenon that hinders continual learning. Existing universal function approximation theorems for ANNs guarantee function approximation ability, but do not predict catastrophic forgetting. This paper presents a novel universal approximation theorem for multi-variable functions using only single-variable functions and exponential functions. Furthermore, we present \emph{ATLAS}—a novel ANN architecture based on the new theorem. It is shown that ATLAS is a universal function approximator capable of some memory retention, and continual learning. The memory of ATLAS is imperfect, with some off-target effects during continual learning, but it is well-behaved and predictable. An efficient implementation of ATLAS is provided. Experiments are conducted to evaluate both the function approximation and memory retention capabilities of ATLAS.

\end{abstract}

\section{Introduction}\label{sec:introduction}

Catastrophic forgetting~\cite{french1999catastrophic, kemker2018measuring, robins1995catastrophic} is an emergent phenomenon where a machine learning model such as an artificial neural network (ANN) learns a new task, and the subsequent parameter updates interfere with the model's performance on previously learned tasks. Catastrophic forgetting is also called catastrophic interference~\cite{mccloskey1989catastrophic}. If an ANN cannot effectively learn many tasks, it has limited utility in the context of continual learning~\cite{hadsell2020embracing, kaushik2021understanding}. Catastrophic forgetting is like learning to pick up a cup, but simultaneously forgetting how to breathe. Even linear functions are susceptible to catastrophic forgetting, as illustrated in Figure~\ref{fig:fig_linear_function}

 \begin{figure}[!h]
\centering
\includegraphics[width=0.5\linewidth]{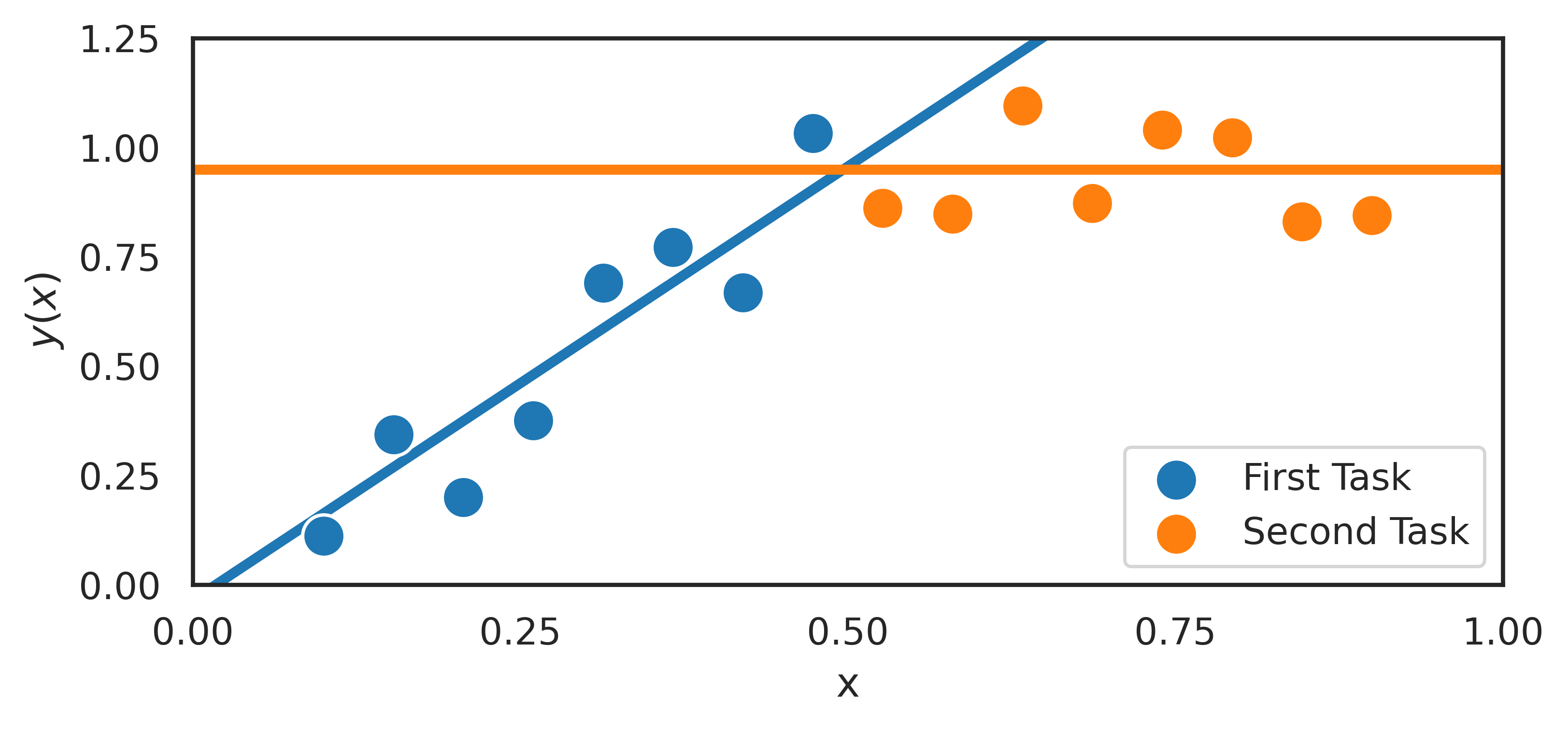}
\caption{A linear function is susceptible to catastrophic forgetting.}
\label{fig:fig_linear_function}
\end{figure}

The simple example of a linear regression model being susceptible to catastrophic forgetting might be due to the non-linearity of the target function, noise, or parameter sharing across the input. Parameter sharing is avoidable with piece-wise defined functions such as splines~\citep{kasam-paper}. ANNs can be explained in many ways; a useful analogy is to compare ANNs to very large lookup tables that store information. Removing and updating values has off-target effects throughout the table or ANN.

Universal function approximation theorems are a cornerstone of machine learning, and prove that ANNs can approximate any given continuous target function~\cite{hanin2019universal, hornik1989multilayer, kratsios2021universal} under certain assumptions. The theorems do not specify how to find an ANN with sufficient performance for problems in practice. Gradient descent optimisation is the convention for finding/training neural networks, but other optimisation and learning procedures exist~\citep{NEURIPS2020_e7a425c6}. ATLAS models trained with gradient descent methods exhibit desirable properties. However, other optimisation techniques like evolutionary algorithms may not elicit the same properties.

This paper introduces ATLAS—a novel universal function approximator based on B-splines that has some intrinsic memory retention, even in the absence of other training and regularisation techniques. ATLAS has well-behaved parameter gradients that are sparse, bounded and orthogonal between input points that are far enough from each other. The accompanying representation and universal approximation theorems are also provided.

\section{Relevant Studies}\label{sec:relevant_studies}

It is conjectured that overlapping representations in ANNs lead to catastrophic forgetting~\citep{kaushik2021understanding}. Catastrophic forgetting occurs when parameters necessary for one task change while training to meet the objectives of another task~\citep{kirkpatrick2017overcoming, mcrae1993catastrophic}. The least desirable strategy to mitigate catastrophic forgetting is retraining a model over all tasks. Regularisation techniques like elastic weight consolidation (EWC) have also been employed~\citep{kirkpatrick2017overcoming}. Data augmentation approaches such as rehearsal and pseudo-rehearsal have also been employed~\citep{robins1995catastrophic}. Other ideas from optimal control theory in combination with dynamic programming have also been applied to counteract catastrophic forgetting, with a cost functional similar in form to the action integral from physics and Lagrangian mechanics~\citep{MetaContinualLearningviaDynamicProgramming}.

Orthogonal Gradient Descent (OGD) is a training augmentation or optimisation technique that modifies the gradient updates of subsequent tasks to be orthogonal to previous tasks~\citep{orthogonalgradientdescent2021,bennani2021generalisation}. One can describe data in terms of a distribution defined over the input space, target values, and time (the order of data or tasks that are presented during training). OGD attempts to make gradient updates orthogonal to each other over time. ATLAS, in contrast, possesses distal orthogonality, meaning that if two inputs are far enough from each other in the input space, then corresponding gradient updates will be orthogonal. A corollary of this is that if the data distribution between tasks shifts in the input space, then the subsequent gradient updates will tend to be orthogonal. ATLAS does not use external memory like OGD. Extensions of OGD include PCA-OGD, which compresses gradient updates into principal components to reduce memory requirements~\citep{pmlr-v130-doan21a}. The Neural Tangent Kernel (NTK) overlap matrices, as discussed by~\citet{pmlr-v130-doan21a}, could be a useful tool for analysing ATLAS models. 

The survey by~\citet{Delange_2021} gives an extensive overview of continual learning to address catastrophic forgetting. ATLAS is a model that implements parameter isolation, because of its use of piece-wise defined splines. Particularly relevant to ATLAS is the work on scale of initialisation and extreme memorisation~\citep{ExtremeMemorizationviaScaleofInitialization}. Increasing the density of basis functions in ATLAS can lead to better memorisation, and increases the scale of some parameters in ATLAS which may affect generalisation.

Pi-sigma neural networks use nodes that compute products instead of sums~\citep{pi-sigma}. Pi-sigma neural networks have some similarities with the global structure of ATLAS. B-splines, which form the basis of ATLAS, have been applied for machine learning~\citep{douzette2017b}. \citet{scardapane2017learning} investigated trainable activation functions parameterised by splines. 
Uniform cubic B-splines have basis functions that are translates of one another~\citep{Branson04apractical}. Uniform cubic B-splines have been tested for memory retention, and ATLAS is an improvement on existing spline models~\citep{kasam-paper}.

B-splines, and by extension ATLAS, can be trained to fit lower frequency components, expanded and trained again until a network is found with sufficient accuracy and generalisation, similar to other techniques~\citep{lane1991multi,FunctionalRegularization}. It is not necessary to expand the capacity of an ATLAS model to learn new tasks, as with some other approaches~\citep{progressiveneuralnetworks2016}. ATLAS does in practice demonstrate something akin to "graceful forgetting" as discussed in~\citet{graceful-forgetting-continual-learning-via-neural-pruning}.

\section{Notation}
Vector quantities like $\vec{\mathbf{x}}$ are clearly indicated with a bar or arrow for legibility. Parameters, inputs, functions etc. without a bar or arrow are scalar quantities like $S(x)$. Some scalar quantities with indices are the scalar components of a vector like $x_{j}$ or scalar parameters in the model like $ \theta_{i}$. The gradient operator that acts on a scalar function like $ \grad_{\vec{\mathbf{\theta}}} A( \vec{\mathbf{x}} )$ yields a vector-valued function $ \grad_{\vec{\mathbf{\theta}}} A( \vec{\mathbf{x}} )$ as is typical of multi-variable calculus.

\section{Exponential Representation Theorem}

Any continuous multi-variable function on a compact space can be uniformly approximated with multi-variable polynomials by the Stone-Weierstrass Theorem. Let $\mathcal{I}$ denote an index set of tuples of natural numbers including zero such that $i_{j} \in \mathbb{N}^{0}$ for all $j \in \mathbb{N}$ with $i = (i_{1},..,i_{n}) \in \mathcal{I}$ and $a_{i} \in \mathbb{R}$. Multi-variable polynomials can be represented as:

$$
y(\vec{\mathbf{x}}) 
= y(x_{1},..,x_{n} ) 
= \sum_{i \in \mathcal{I} } a_{i}  x_{1}^{i_{1}} x_{2}^{i_{2}} ... x_{n}^{i_{n}}
= \sum_{i \in \mathcal{I} } a_{i} \Pi_{j=1}^{n} x_{j}^{i_{j}} 
$$

Each monomial term $a_{i} \Pi_{j=1}^{n} x_{j}^{i_{j}} $ is a product of single-variable functions in each variable. It is desirable to rewrite products as sums using exponentials and logarithms. 


\begin{lemma}
For any $a_{i} \in \mathbb{R}$, there exists $\gamma_{i}>0$ and $\beta_{i}>0$, such that: 
$a_{i}= \gamma_{i} - \beta_{i}$
\end{lemma}


\begin{theorem}[Exponential representation theorem]
\label{thm_exp_rep}
Any multi-variable polynomial function $y(\vec{\mathbf{x}})$ of $n$ variables over the positive orthant, can be exactly represented by continuous single-variable functions $g_{i,j}(x_{j})$ and $h_{i,j}(x_{j})$ in the form:

\begin{equation*}
\begin{split}
y(\vec{\mathbf{x}}) 
= \sum_{i \in \mathcal{I} } \exp( \Sigma_{j=1}^{n} g_{i,j}(x_{j}))    - \exp( \Sigma_{j=1}^{n} h_{i,j}(x_{j}))
\end{split}
\end{equation*}
\end{theorem}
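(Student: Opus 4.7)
The plan is to reduce the claim to a term-by-term rewriting of the polynomial, using the preceding lemma to split each coefficient into a difference of two positive numbers, and then using $\log$/$\exp$ to turn monomials into exponentials of sums.

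First I would write $y(\vec{\mathbf{x}}) = \sum_{i \in \mathcal{I}} a_i \prod_{j=1}^n x_j^{i_j}$ and apply the lemma to each coefficient to obtain $a_i = \gamma_i - \beta_i$ with $\gamma_i, \beta_i > 0$. Since we are working on the positive orthant, $\ln x_j$ is well-defined and $x_j^{i_j} = \exp(i_j \ln x_j)$, so each monomial becomes $\prod_{j=1}^n x_j^{i_j} = \exp\!\bigl(\sum_{j=1}^n i_j \ln x_j\bigr)$. Multiplying by the positive constant $\gamma_i$ is the same as adding $\ln \gamma_i$ inside the exponent, and likewise for $\beta_i$. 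Thus each term $a_i \prod_j x_j^{i_j}$ equals
\begin{equation*}
\exp\!\Bigl(\ln \gamma_i + \textstyle\sum_{j=1}^n i_j \ln x_j\Bigr) - \exp\!\Bigl(\ln \beta_i + \textstyle\sum_{j=1}^n i_j \ln x_j\Bigr).
\end{equation*}

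Next I would absorb the scalar $\ln \gamma_i$ (resp.\ $\ln \beta_i$) into a single coordinate, say $j=1$, by defining
\begin{equation*}
g_{i,1}(x_1) = \ln \gamma_i + i_1 \ln x_1, \qquad g_{i,j}(x_j) = i_j \ln x_j \;\;(j \geq 2),
\end{equation*}
and analogously $h_{i,1}(x_1) = \ln \beta_i + i_1 \ln x_1$, $h_{i,j}(x_j) = i_j \ln x_j$ for $j \geq 2$. Each $g_{i,j}$ and $h_{i,j}$ is continuous in a single variable on the positive orthant, and summing over $i \in \mathcal{I}$ recovers the stated representation of $y(\vec{\mathbf{x}})$ exactly.

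There is no real obstacle — the only subtlety is handling the constant coefficients $\gamma_i$ and $\beta_i$, which are not themselves single-variable functions of any $x_j$; absorbing them into $g_{i,1}$ and $h_{i,1}$ resolves this. The restriction to the positive orthant is essential so that $\ln x_j$ exists; on the boundary or outside the orthant the construction would fail, and this should be noted as the scope of the theorem.
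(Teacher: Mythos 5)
Your proposal is correct and follows essentially the same route as the paper's own proof: split each coefficient via the lemma into $\gamma_i - \beta_i$, rewrite each monomial as $\exp(\log\gamma_i + \sum_j i_j\log x_j) - \exp(\log\beta_i + \sum_j i_j\log x_j)$ on the positive orthant, and absorb the constants into one of the single-variable functions. Your explicit definition of $g_{i,1}$ and $h_{i,1}$ just makes concrete the step the paper phrases as "without loss of generality."
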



\begin{proof}
Consider any monomial term $a_{i} \Pi_{j=1}^{n} x_{j}^{i_{j}} $ with $a_{i} \in \mathbb{R}$, then by Lemma 1 there exist strictly positive numbers $\gamma_{i}>0$ and $\beta_{i}>0$, such that: 

\begin{equation*} 
\begin{split}
a_{i} \Pi_{j=1}^{n} x_{j}^{i_{j}}  
& = \gamma_{i} \Pi_{j=1}^{n} x_{j}^{i_{j}}  - \beta_{i} \Pi_{j=1}^{n} x_{j}^{i_{j}}   \\
& = \exp(\log(\gamma_{i} \Pi_{j=1}^{n} x_{j}^{i_{j}} )) - \exp(\log(\beta_{i} \Pi_{j=1}^{n} x_{j}^{i_{j}} )) \\
& = \exp(\log(\gamma_{i}) + \Sigma_{j=1}^{n} \log(x_{j}^{i_{j}} )) 
    - \exp(\log(\beta_{i}) +\Sigma_{j=1}^{n} \log(x_{j}^{i_{j}} )) \\
\end{split}
\end{equation*}

The argument of each exponential function is a sum of single-variable functions and constants. Without loss of generality, a set of single-variable functions can be defined such that:

\begin{equation*} 
\begin{split}
a_{i} \Pi_{j=1}^{n} x_{j}^{i_{j}}  
& = \exp( \Sigma_{j=1}^{n} g_{i,j}(x_{j}))
    - \exp( \Sigma_{j=1}^{n} h_{i,j}(x_{j})) 
\end{split}
\end{equation*}

Since this holds for any $a_{i} \Pi_{j=1}^{n} x_{j}^{i_{j}} $ and all $i \in \mathcal{I}$, it follows that:

\begin{equation*} 
\begin{split}
y(\vec{\mathbf{x}}) 
= \sum_{i \in \mathcal{I} } \exp( \Sigma_{j=1}^{n} g_{i,j}(x_{j}))    - \exp( \Sigma_{j=1}^{n} h_{i,j}(x_{j}))
\end{split}
\end{equation*}
\end{proof}

This result is fundamental to the paper. Since every continuous function can be approximated with multi-variable polynomials, it follows that every continuous function can be approximated with positive and negative exponential functions. Single-variable function approximators are pivotal and must be reconsidered. Universal function approximation can also be proven with the sub-algebra formulation of the Stone-Weierstrass theorem, but it's not as delightful and simple as the first constructive proof given above.

\section{Single-Variable Function Approximation}

Splines are piece-wise defined single-variable functions over some interval. Each sub-interval of a spline is most often locally given by a low degree polynomial, even though the global structure is not a low degree polynomial. B-splines are polynomial splines that are defined in a way that resembles other basis function formulations~\citep{Branson04apractical}. Each single-variable function in ATLAS is approximated with uniform cubic B-spline basis functions, shown in Figure~\ref{fig:basis_densities_comparison}. B-splines can approximate any single-variable function, similar to using the Fourier basis. With uniform B-splines, each basis function is scaled so that the unit interval is \textbf{uniformly} partitioned, as in Figure~\ref{fig:basis_densities_comparison}.

\begin{figure} [!h]
     \centering
     \begin{subfigure}[!h]{0.32\textwidth}
         \centering
         \includegraphics[width=\textwidth]{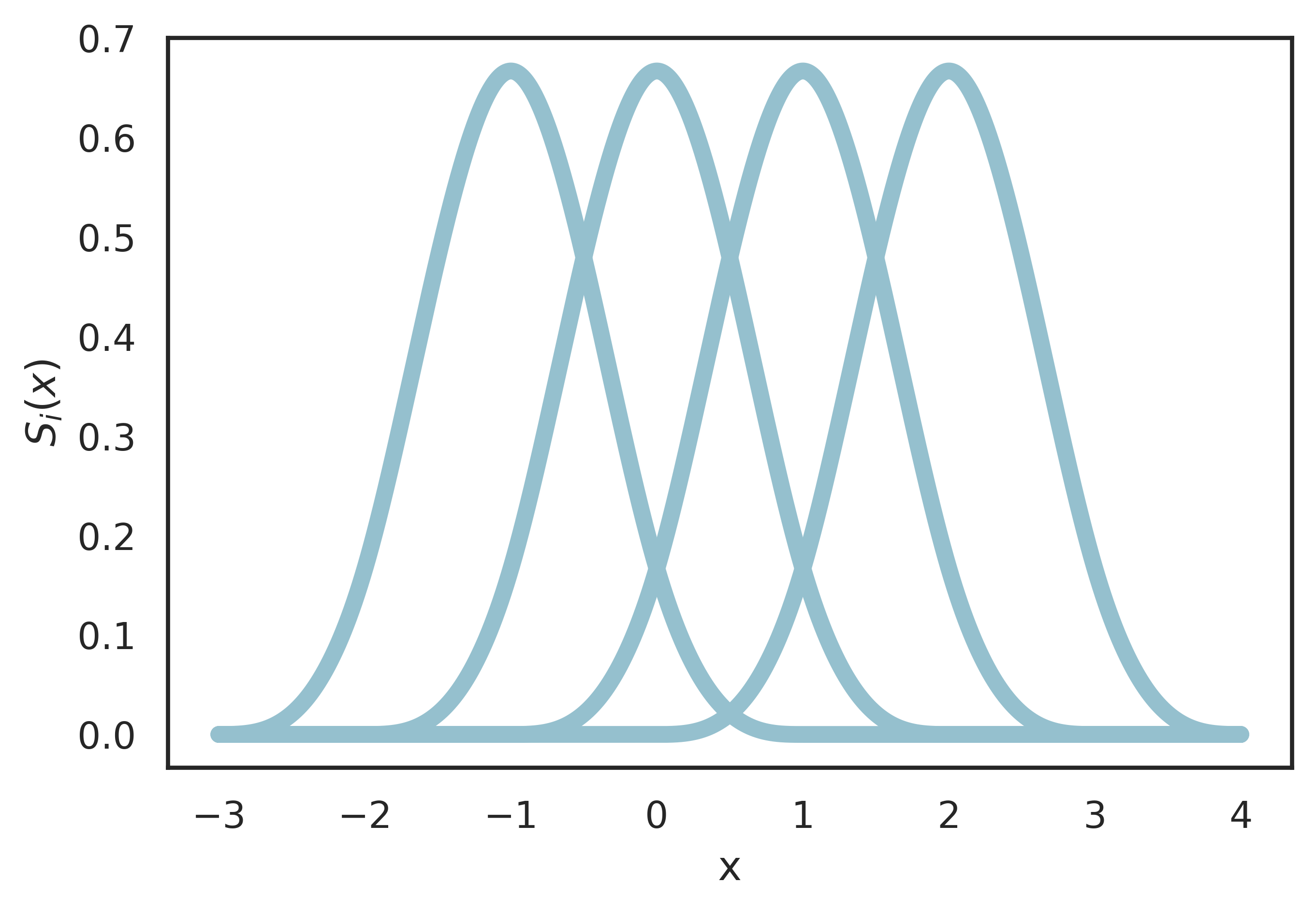}
         \caption{$\rho = 0$}
         \label{fig:basis_density_0}
     \end{subfigure}
     \hfill
     \begin{subfigure}[!h]{0.32\textwidth}
         \centering
         \includegraphics[width=\textwidth]{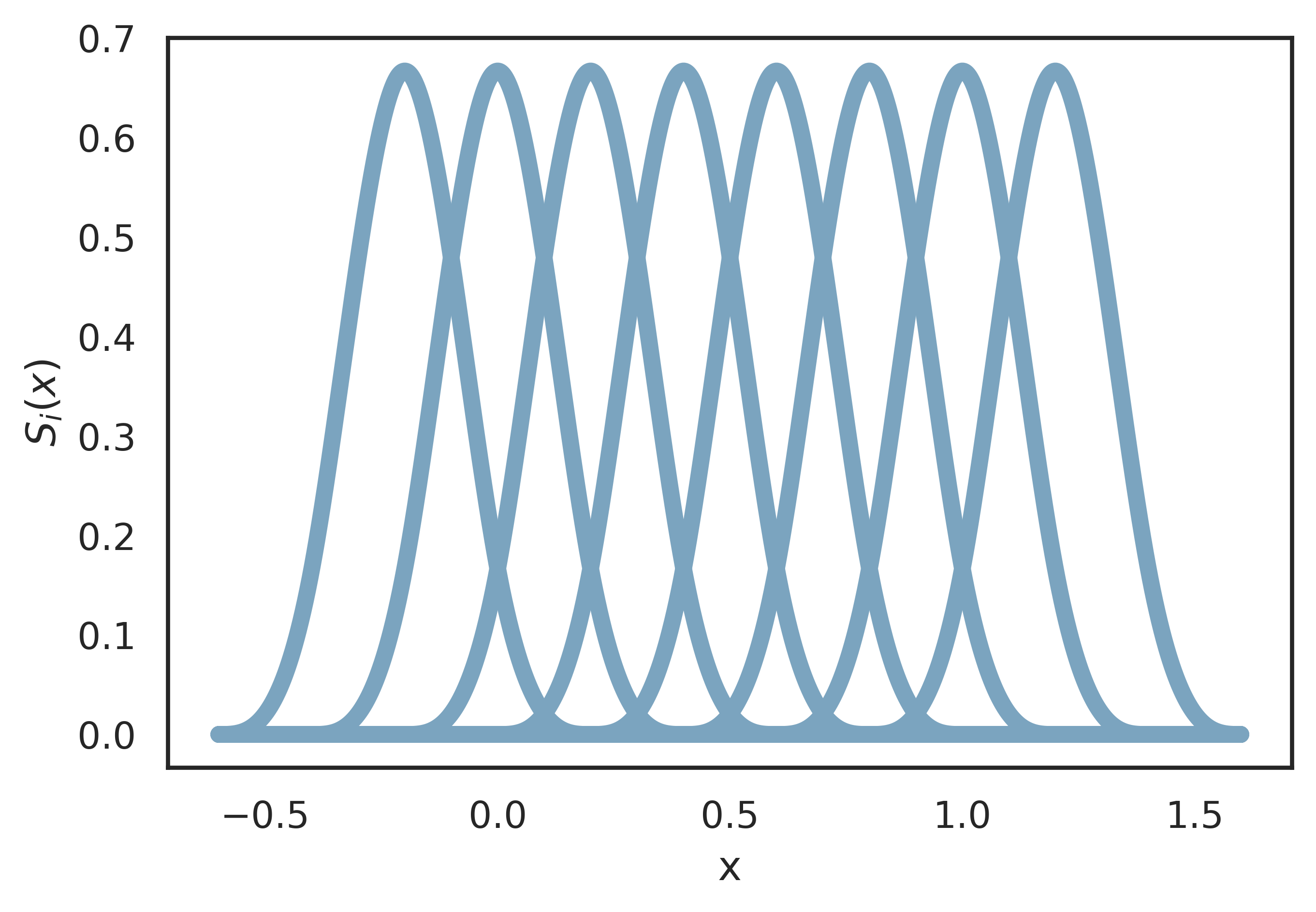}
         \caption{$\rho = 1$}
         \label{fig:basis_density_0}
     \end{subfigure}
        \hfill
     \begin{subfigure}[!h]{0.32\textwidth}
         \centering
         \includegraphics[width=\textwidth]{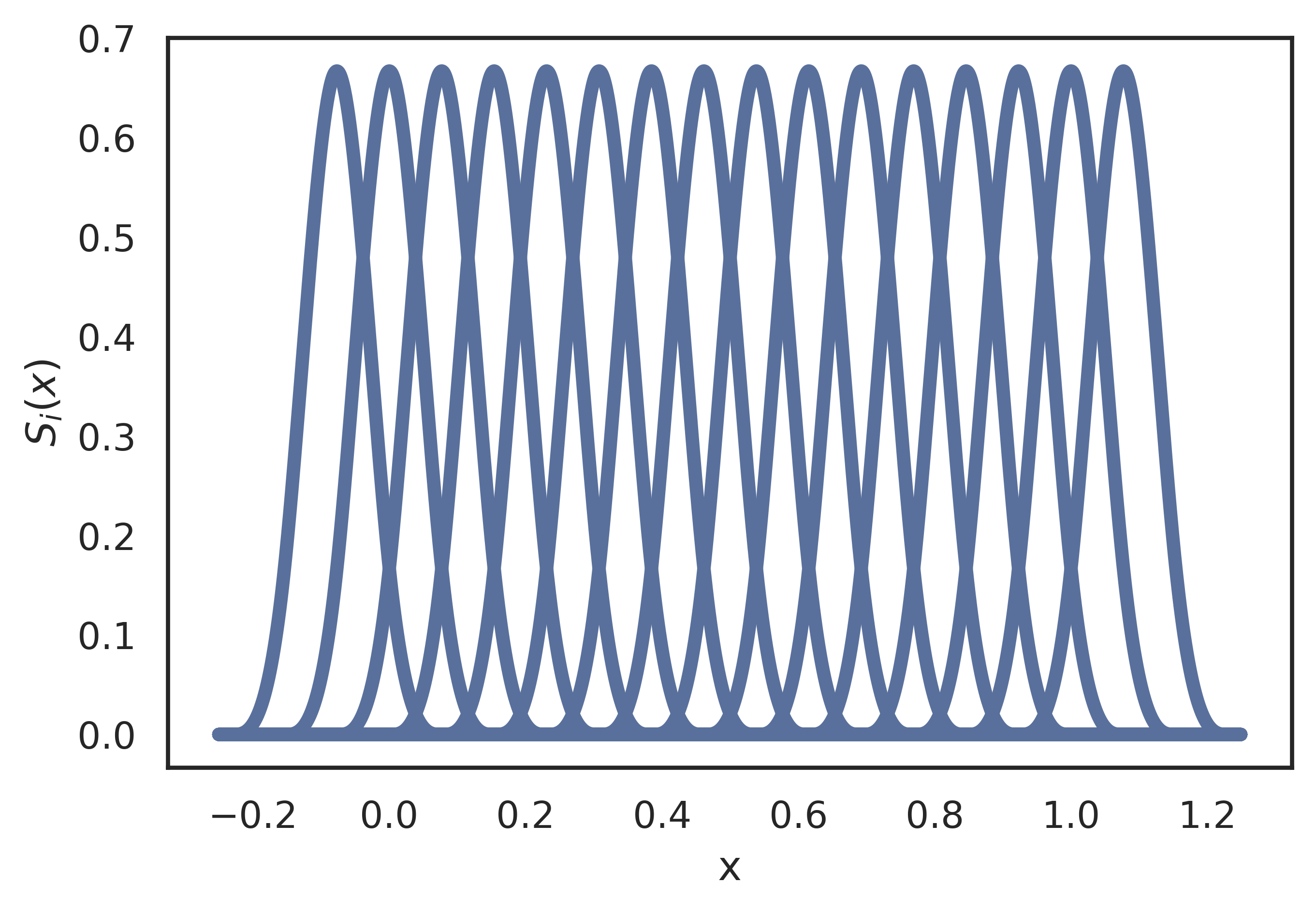}
         \caption{$\rho = 2$}
         \label{fig:basis_density_0}
     \end{subfigure}
        \caption{If uniformly spaced B-splines are used, then each basis function has the same shape. This makes it possible to use the same activation function by scaling and translating the inputs. This is also true for different densities of uniform cubic B-splines.}
        \label{fig:basis_densities_comparison}
\end{figure}

The activation function to implement B-splines is given by:

$$ S(x) =\begin{cases} 
      \frac{1}{6} x^{3} &  0 \leq x < 1\\
      \frac{1}{6} \left[-3(x-1)^{3} +3(x-1)^{2} +3(x-1) + 1 \right] &  1 \leq x < 2\\
      \frac{1}{6} \left[3(x-2)^{3} -6(x-2)^{2} + 4 \right]  & 2 \leq x < 3\\
      \frac{1}{6} ( 4-x ) ^{3} &  3 \leq x < 4\\
      0 & otherwise 
   \end{cases}
$$

The choice was made to use uniform cubic B-splines due to their excellent performance and robustness to catastrophic forgetting, illustrated in Figure~\ref{fig:properties_visual_proof}. Using uniform B-splines instead of arbitrary sub-interval partitions (also called knots in literature) makes optimisation easier. Optimising partitions is non-linear, but optimising only coefficient (also called control points) is linear and thus convex.

\begin{figure} [!h]
     \centering
     \begin{subfigure}[!h]{0.45\textwidth}
         \centering
         \includegraphics[width=\textwidth]{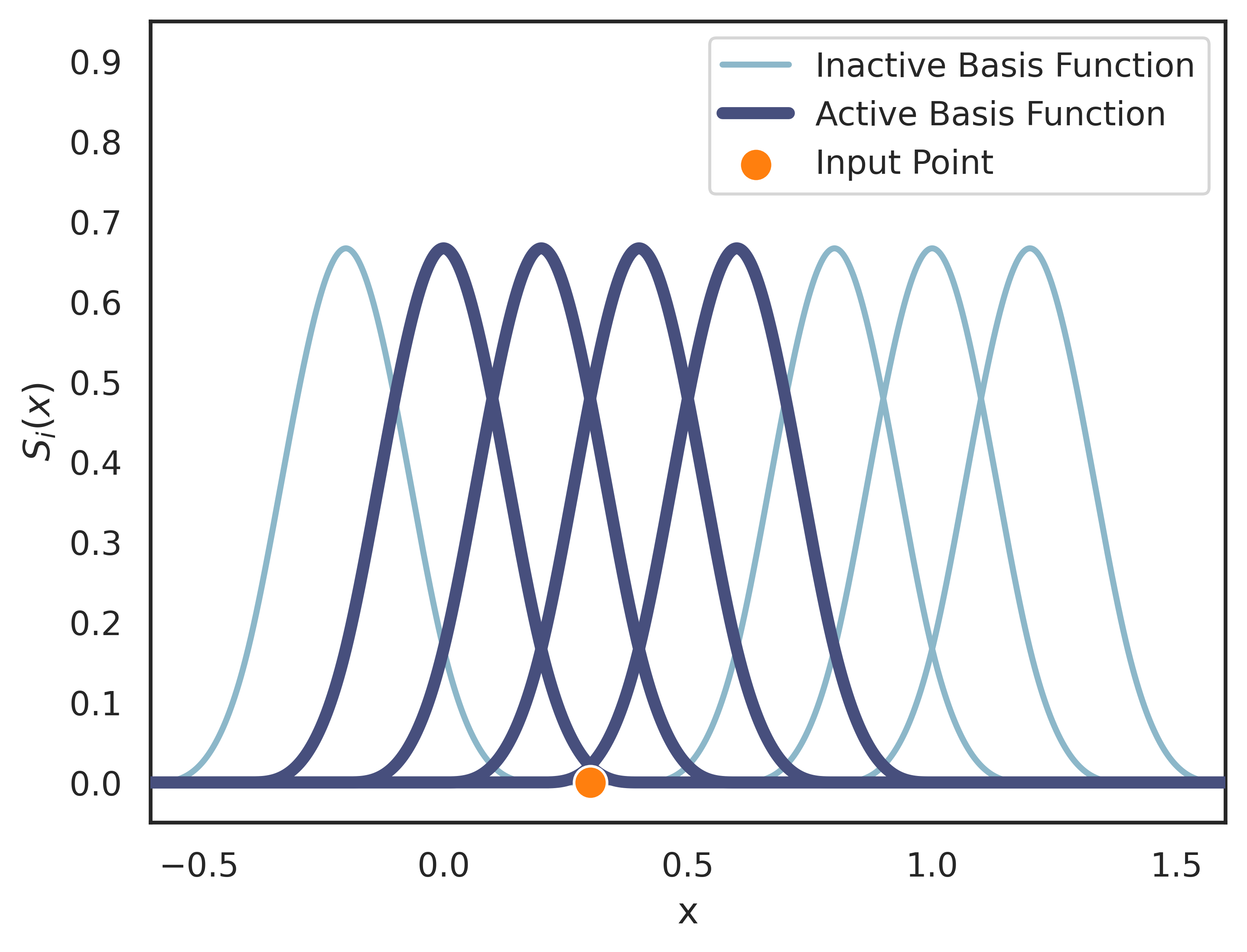}
         \caption{The activation is sparse and bounded.}
     \end{subfigure}
     \hfill
     \begin{subfigure}[!h]{0.45\textwidth}
         \centering
         \includegraphics[width=\textwidth]{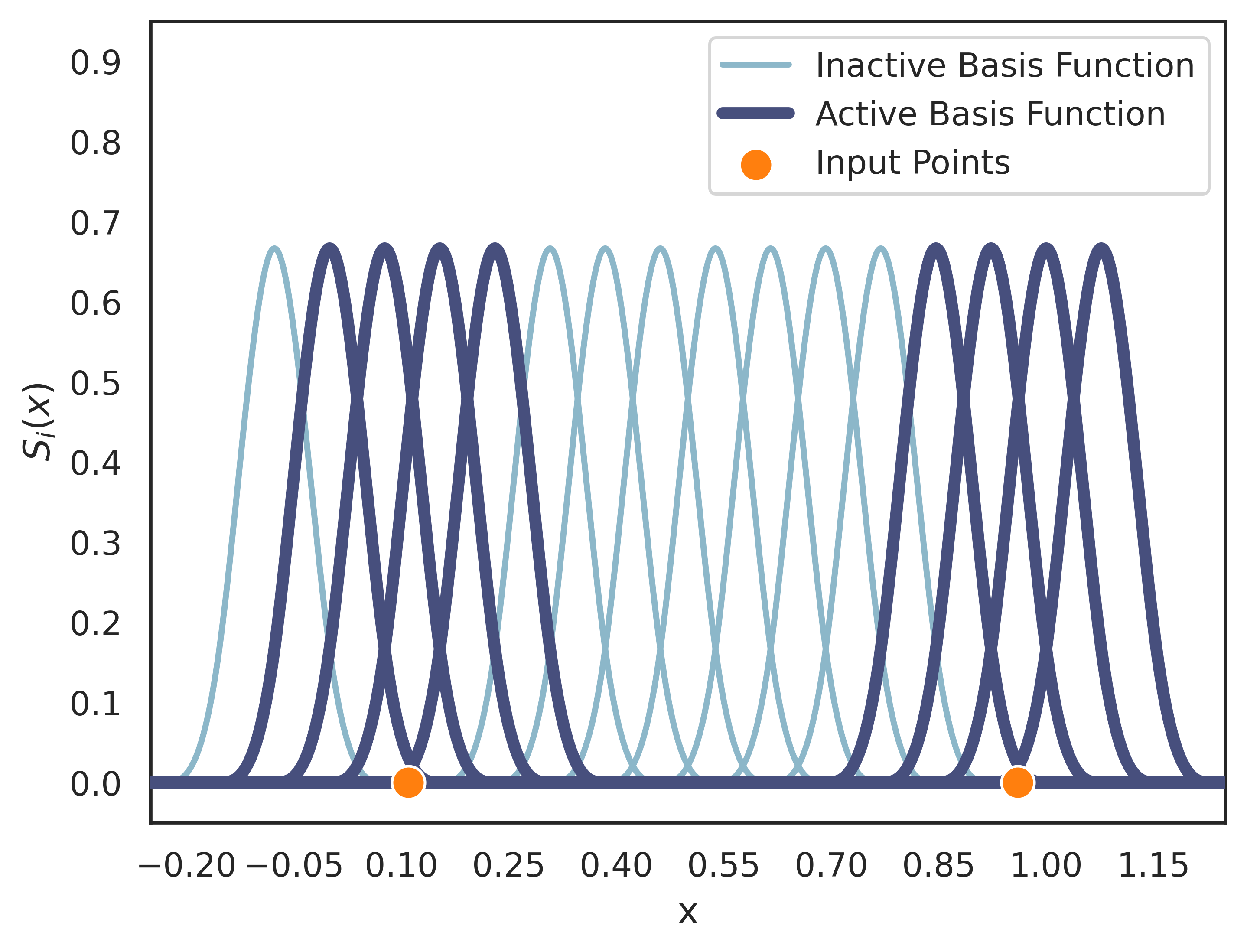}
         \caption{Distal orthogonality.}
     \end{subfigure}
     \caption{Single-variable function}
     \label{fig:properties_visual_proof}
\end{figure}

Each basis function is multiplied by a parameter and summed together. The total number of basis functions is typically fixed. Cubic B-splines are 3\textsuperscript{rd} order polynomials, and thus require a minimum of $3+1=4$ control points or basis functions.

Instead of considering arbitrary densities of uniform cubic B-splines, we look at powers of two times the minimum number of basis functions, called $\rho$-density B-spline functions.

\begin{definition}[$\rho$-density B-spline function]
\label{thm_atlas_def}
A $\rho$-density B-spline function is a uniform cubic B-spline function with $2^{\rho+2}$ basis functions:
\begin{equation*} 
\begin{split}
f(x) 
= \sum_{i=1}^{2^{\rho+2}} \theta_{i} S_{i}(x) 
= \sum_{i=1}^{2^{\rho+2}} \theta_{i} S(w_{i}x + b_{i})
= \sum_{i=1}^{2^{\rho+2}} \theta_{i} S( (2^{\rho+2}-3)x + 4 - i )
\end{split}
\end{equation*}
\end{definition}

Consider the problem of expanding a single-variable function approximator with more basis functions to increase its expressive power. Using the Fourier basis makes it trivially easy by adding higher frequency sines and cosines with coefficients initialised to zero. It is trickier to achieve something similar with uniform cubic B-splines. There are algorithms for creating new splines from existing splines with knot insertion, but the intermediate steps result in non-uniform knots and splines. A simple and practical compromise that we propose is to use mixtures of different $\rho$-density B-spline functions, as illustrated in Figure~\ref{fig:basis_densities_comparison}. 

\begin{definition}[mixed-density B-spline function]
\label{thm_atlas_def}
A mixed-density B-spline function is a single-variable function approximator that is obtained by summing together different $\rho$-density B-spline functions. Only the maximum $\rho$-density B-spline function has trainable parameters, the others are constant. Mixed-density B-spline functions are of the form:
\begin{equation*}
    \begin{aligned}
f(x) 
&= \sum_{\rho=0}^{r} \sum_{i=1}^{ 2^{\rho+2}} \theta_{\rho,i} S_{\rho,i}(x) 
    \end{aligned}
\end{equation*}
\end{definition}

Only the \textbf{maximum} $r=\rho$-density B-spline has trainable coefficients. All lower density $r>\rho$-density B-spline have frozen and constant coefficients. The maximum $r=\rho$-density B-spline has trainable coefficients with gradient updates that are orthogonal if the distance between two inputs is large enough.

Similar to increasing the expressiveness of a Fourier basis function approximator by adding higher frequency terms, one can add larger density cubic B-spline functions. Analytically, we can initialise all the new scalar parameters $\theta_{r+1,i} = 0, \; \forall i \in \bf{N}$ such that:

\begin{equation*}
    \begin{aligned}
            f(x) 
&= \sum_{\rho=0}^{r} \sum_{i=1}^{ 2^{\rho+2}} \theta_{\rho,i} S_{\rho,i}(x)
= \sum_{\rho=0}^{r+1} \sum_{i=1}^{ 2^{\rho+2}} \theta_{\rho,i} S_{\rho,i}(x) 
    \end{aligned}
\end{equation*}

It is therefore possible to create a minimal model with $r=0$ initialised at zero, and train the model until convergence. Then one can create a new model with $r=1$, by subsuming the previous model's parameters, and train this more expressive model until convergence. This process of training and expansion can be continued indefinitely, and is shown in Figure~\ref{fig:fig_before_after_training_density_increase}.
 
\begin{figure}[!h]
\centering
\includegraphics[width=0.995\linewidth]{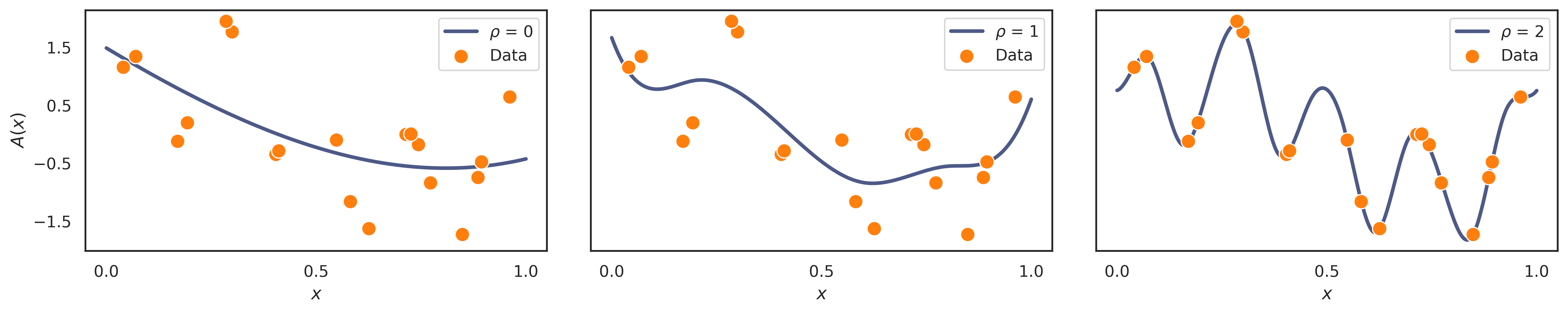}
\caption{Doubling densities of basis functions before and after training.}
\label{fig:fig_before_after_training_density_increase}
\end{figure}

\section{ATLAS}\label{sec:ATLAS}
ATLAS is named for carrying the burden of all it must remember, after the Titan god Atlas in Greek mythology who was tasked with holding the weight of the world. ATLAS is also an acronym for AddiTive exponentiaL Additive Splines.

\begin{definition}[ATLAS]
\label{thm_atlas_def}
ATLAS is a function approximator of $n$ variables, with mixed-density B-spline functions $f_{j}(x_{j})$, $g_{i,j}(x_{j})$, and $h_{i,j}(x_{j})$ in the form:

\begin{equation*} 
\begin{split}
A(\vec{\mathbf{x}}) 
 := \sum_{j=1}^{n} f_{j}(x_{j}) + 
\sum_{k = 1}^{M} \frac{1}{k^{2}} \exp( \Sigma_{j=1}^{n} g_{k,j}(x_{j})) 
                - \frac{1}{k^{2}} \exp( \Sigma_{j=1}^{n} h_{k,j}(x_{j})) 
\end{split}
\end{equation*}
\end{definition}

ATLAS is equivalently given by the compact notation:

\begin{equation*} 
\begin{split}
A(\vec{\mathbf{x}}) 
:= F(\vec{\mathbf{x}}) + 
\sum_{k = 1}^{M} \frac{1}{k^{2}} \exp(G_{k}(\vec{\mathbf{x}})) 
                - \frac{1}{k^{2}} \exp(H_{k}(\vec{\mathbf{x}}))
\end{split}
\end{equation*}

The absolutely convergent series of scale factors $k^{-2}$ was chosen for numerical stability and to ensure the model is absolutely convergent. Another feature is that the series of scale factors also breaks the symmetry that would otherwise exist if all mixed-density B-spline functions were initialised to zero. Initialising all the parameters to be zero is a departure from the conventional approach of random initialisation. The number of exponential terms can be increased without changing the output of the model. We can choose to initialise $G_{M+1}(\vec{\mathbf{x}}) = 0$ and $H_{M+1}(\vec{\mathbf{x}}) = 0$, such that the model capacity can be increased at will.

ATLAS is a universal function approximator with some inherent memory retention. It possesses three properties atypical of most universal function approximators:
\begin{enumerate}
    \item The activity within ATLAS is sparse -- most neural units are zero and inactive.
    \item  The gradient vector with respect to trainable parameters is bounded regardless of the size and capacity of the model, so training is numerically stable for many possible training hyper-parameters.
    \item Inputs that are sufficiently far from each other have orthogonal representations.
\end{enumerate} 

The proofs of the three properties follows from the single-variable case, the assumption of bounded single-variable functions and parameters, and the absolutely convergent $k^{-2}$ scale factors.

\begin{property}[Sparsity]
For any $\vec{\mathbf{x}} \in D(A) \subset R^{n}$ and bounded trainable parameters $\theta_{i}$ with index set $\Theta$, the gradient vector of trainable parameters (for ATLAS) is sparse:

$$ 
\norm{ \grad_{\vec{\mathbf{\theta}}} A(\vec{\mathbf{x}} )}_{0} 
= \sum_{i \in \Theta}  d_{Hamming} \left(\frac{\partial A}{\partial \theta_{i}} (\vec{\mathbf{x}}),0  \right)
\leq 4 n (2M+1)
$$
\end{property}

\begin{remark}
For a fixed number of variables $n$, the model has a total of $n2^{r+2}(2M+1)$ trainable parameters. The gradient vector has a maximum of $4 n (2M+1)$ non-zero entries, which is independent of $r$. Recall that only the maximum density ($\rho=r$) cubic B-spline function has trainable parameters. The fraction of trainable basis functions that are active is at most $2^{-r}$. Sparsity entails efficient implementation, and suggests possible memory retention and robustness to catastrophic forgetting.
\end{remark}

\begin{property}[Gradient flow attenuation]
For any $\vec{\mathbf{x}} \in D(A) \subset R^{n}$ and bounded trainable parameters $\theta_{i}$ with index set $\Theta$: if all the mixed-density B-spline functions are bounded, then the gradient vector of trainable parameters for ATLAS is bounded:

$$ 
\norm{ \grad_{\vec{\mathbf{\theta}}} A( \vec{\mathbf{x}} )}_{1} 
= \sum_{i \in \Theta}  \left| \frac{\partial A}{\partial \theta_{i}} ( \vec{\mathbf{x}} ) \right| 
< U
$$
\end{property}

\begin{remark}
For a fixed number of variables $n$, the model has a total of $n2^{r+2}(2M+1)$ trainable parameters. The factor of $k^{-2}$ inside the expression for ATLAS is necessary to ensure the sum is convergent in the limit of infinitely many exponential terms $M \to \infty$. Only the maximum density ($\rho=r$) cubic B-spline function has trainable parameters, so that the gradient vector is bounded in the limit of arbitrarily large densities $r \to \infty$. Smaller densities cannot be trainable, otherwise this property does not hold. The bounded gradient vector implies that ATLAS is numerically stable during training, regardless of its size or parameter count.
\end{remark}

\begin{property}[Distal orthogonality]
For any $\vec{\mathbf{x}},\vec{\mathbf{y}} \in D(A) \subset R^{n}$ and bounded trainable parameters $\theta_{i}$ for an ATLAS model $A(\vec{\mathbf{x}})$:

$$
 \min_{j=1, \dots , n }
\{ |x_{j} - y_{j}| \} > 2^{-r} 
\implies  
\langle
\grad_{\vec{\mathbf{\theta}}} A(\vec{\mathbf{x}}) 
, 
\grad_{\vec{\mathbf{\theta}}} A(\vec{\mathbf{y}})
\rangle
= 0
$$
\end{property}

\begin{remark}
Two points that sufficiently differ in each input variable have orthogonal parameter gradients. 
Distal orthogonality means ATLAS is reasonably robust to catastrophic forgetting, without other regularisation and training techniques. However, memory retention can still potentially be improved when used in conjunction with other techniques.
\end{remark}

ATLAS can be implemented with 1D convolution, reshaping, embedding, multiplication and dense layers. The same basis functions have to be computed for each input variable, hence 1D convolutions. By correctly scaling, shifting, and rounding inputs one can compute only the non-zero basis functions with embedding layers. The number of basis functions are chosen from powers of two for convenience, with the maximum density B-spline function having exactly $\lambda=4\times2^{r}$ basis functions. Summing over all densities the total number of all basis functions in each input variable is at most $2 \lambda$, because a geometric series was used. For every output dimension $p$, there are $2M$ exponentials. Each exponential has $n$ single variable functions, with at most $2\lambda$ cubic B-spline basis functions each. ATLAS models have time complexity $\mathcal{O}(pMn\log\lambda )$, and $\mathcal{O}(pMn\lambda)$ space complexity.

\section{Methodology}

The 1-,2- and 8-dimensional models were considered for evaluation, in combination with a chosen width for the update region in Task 2 from $0.1$ to $0.9$ in $0.1$ increments. 30 trials were performed for each combination of model dimension and update region width. Mean Absolute Error (MAE) loss function, the Adam optimiser, and mini batch sizes of 100 are used throughout all experiments.

At the beginning of each trial (for a given dimension and update region width) a random learning rate was sampled uniformly between $10^{-6}$ and $0.01+10^{-6}$. A random noise level was sampled from an exponential distribution with scale parameter equal to one. The Task 1 target function is constructed from 1000 Euclidean radial basis functions (RBFs) with locations chosen uniformly over the entire input domain, with RBF scale parameters sampled independently from an exponential distribution (scale parameter equal to 10). The weights of each radial basis function are sampled from a normal distribution with mean zero and standard deviation equal to one. The Task 2 target function is exactly the same as the Task 1 target function -- except for a square-like region with width equal to update region width. The location of the update region is chosen uniformly at random, and such that it is completely inside the domain of the model. The updated region masks the Task 1 target function and instead replaces the values inside it with another function that is sampled from the same distribution as the Task 1 target function, but independently from the Task 1 target function.

After the generation of the target functions 10000 data points are sampled for training, validation, and test sets for Task 1 and Task 2. To simulate the effect of learning unrelated tasks, the training data for Task 2 is only sampled from update region - with no training data outside of it being presented again, by contrast the validation and test sets for Task 2 were sampled over the entire input domain. Gaussian noise with standard deviation equal to the randomly chosen noise level is added to all training data. 
An ATLAS model ($M=10$ positive and $M=10$ negative exponential functions, maximum basis function density $r=4$) with guaranteed distal orthogonality is trained and evaluated on Task 1 and Task 2. Then a modified ATLAS model ($M=10$ positive and $M=10$ negative exponential functions, maximum basis function density $r=4$, trainable lower density basis functions) without guaranteed distal orthogonality is trained and evaluated on Task 1 and Task 2 using the same data sets as previously mentioned model. The final test errors for Task 2 are presented. A randomly selected trial of the 2-dimensional case is shown for visual inspection. The experiments presented in the main body of the paper were performed on Google Colab and the relevant code is provided.

\section{Results}
        
As shown in Figure~\ref{fig:visualisation_experiment_2dim} the effect of distal orthogonality is clear and crisp boundaries that limit the effect of Task 2 on the memory of Task 1. Without distal orthogonality there are more off-target effects that can be visualised.

\begin{figure} [!h]
     \centering
     \begin{subfigure}[!h]{0.90\textwidth}
         \centering
         \includegraphics[width=0.90\linewidth]{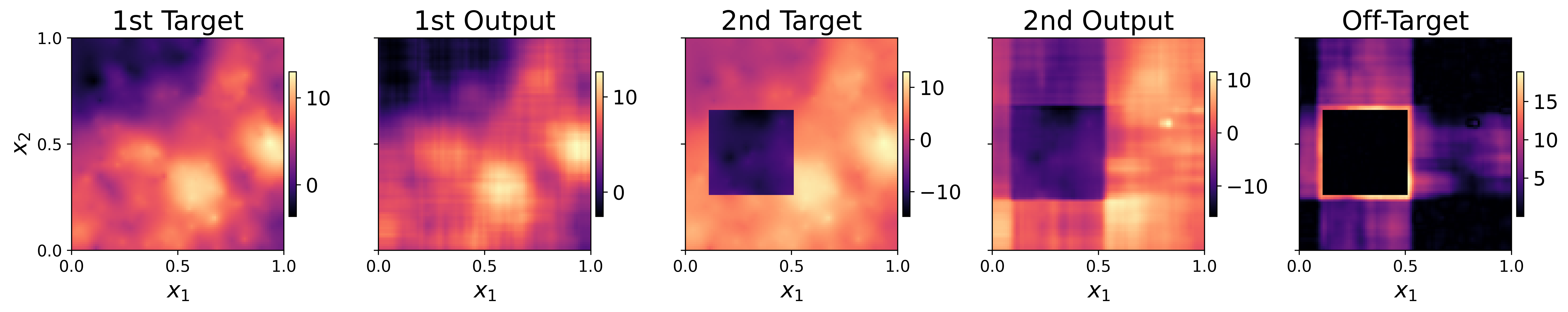}
         \caption{Guaranteed distal orthogonality, Off-target effects deviate from Task 2 target.}
     \end{subfigure}
     \hfill
     \centering
     \begin{subfigure}[!h]{0.90\textwidth}
         \centering
         \includegraphics[width=0.90\linewidth]{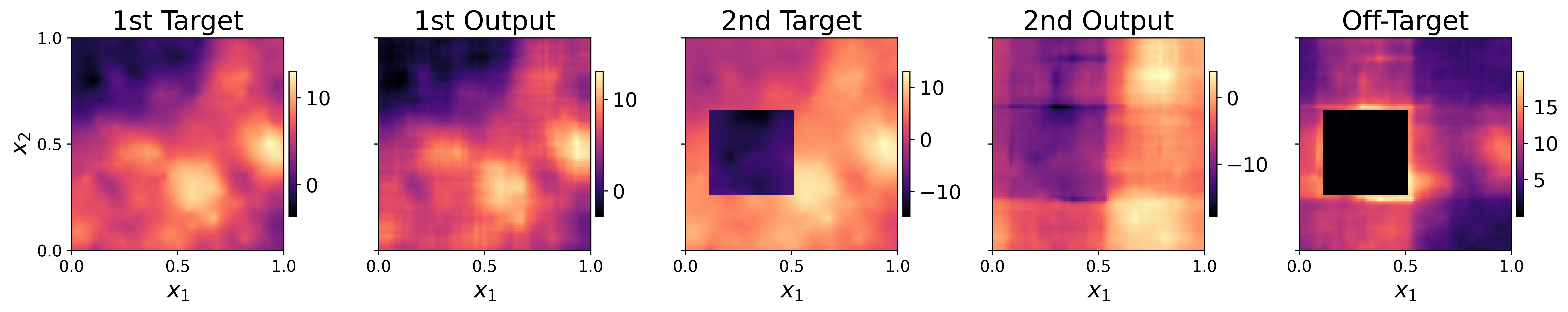}
         \caption{No guaranteed distal orthogonality, Off-target effects deviate from Task 2 target.}
     \end{subfigure}
     \hfill
        \caption{A randomly chosen trial is presented for visual inspection.}
        \label{fig:visualisation_experiment_2dim}
\end{figure}

\begin{figure}[!h]
\centering
\includegraphics[width=0.95\linewidth]{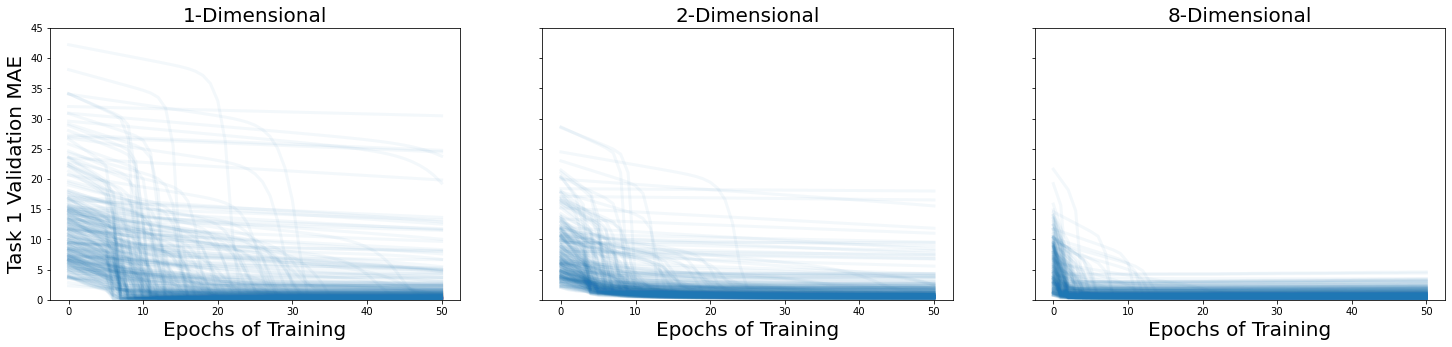}
\caption{Distal orthogonality guaranteed: All validation MAE curves for Task 1.}
\label{fig:fig_before_after_training_density_increase}
\end{figure}

\begin{figure}[!h]
\centering
\includegraphics[width=0.95\linewidth]{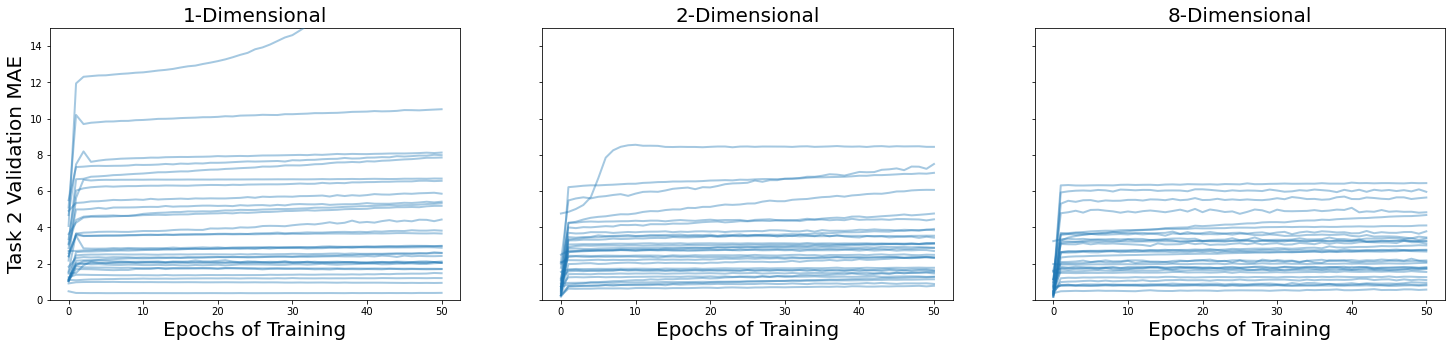}
\caption{No distal orthogonality: Task 2 validation MAE with update region width $\delta=0.1$.}
\label{fig:fig_before_after_training_density_increase}
\end{figure}

\begin{figure}[!h]
\centering
\includegraphics[width=0.95\linewidth]{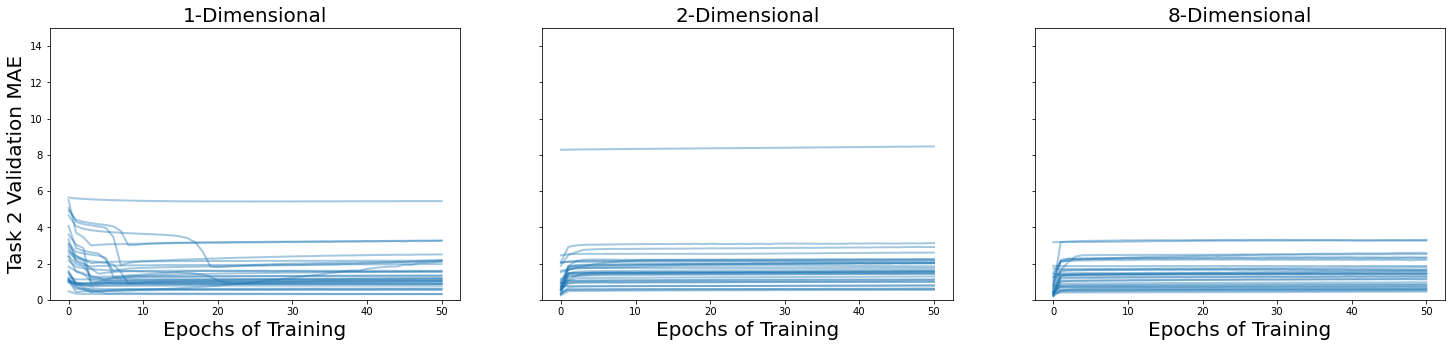}
\caption{Distal orthogonality guaranteed: Task 2 validation MAE with update region width $\delta=0.1$.}
\label{fig:fig_before_after_training_density_increase}
\end{figure}

The effect of distal orthogonality on the averaged MAE for various trials for 1-,2- and 8-dimensional problems are presented as scatter plots of the averaged MAE over 30 trials for different update region widths as shown in Figure~\ref{fig:aggregate_results}. The expected off-target error depends on the dimension of the problem and the width of the updated regions.
   
\begin{figure} [!h]
     \centering
     \begin{subfigure}[!h]{0.3\textwidth}
         \centering
         \includegraphics[width=\textwidth]{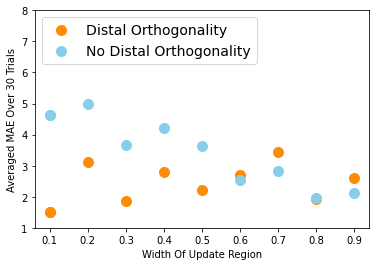}
         \caption{The 1-dimensional model.}
     \end{subfigure}
     \hfill
     \begin{subfigure}[!h]{0.3\textwidth}
         \centering
         \includegraphics[width=\textwidth]{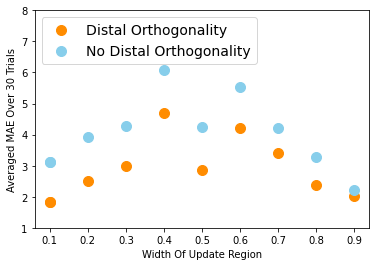}
         \caption{The 2-dimensional model.}
     \end{subfigure}
     \hfill
     \begin{subfigure}[!h]{0.3\textwidth}
         \centering
         \includegraphics[width=\textwidth]{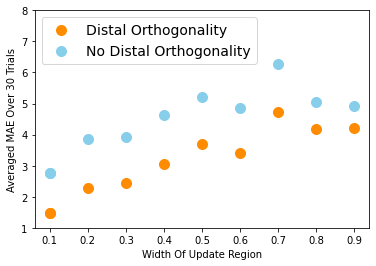}
         \caption{The 8-dimensional model.}
     \end{subfigure}
     \caption{The effect of distal orthogonality on the final test error on task 2 for the 1-,2- and 8-dimensional input.}
     \label{fig:aggregate_results}
\end{figure}

Analytical results to the expected off-target error require simplification, but a reasonable assumption in the absence of other evidence is that each input dimension has equal contribution on the unit hyper-cube. Assume for a fixed input dimension $n$ and some region of width $0<\delta < 1$ where the target function $Y$ is changed such that $\abs{\Delta Y}=1$ is one larger than it was originally. The expected off-target error depends on $k$ the number of input variables inside the updated region: $\varepsilon_{k} \approx \frac{n-k}{n}$. To correctly account for all permutations with the same magnitude of change:

\begin{equation*} 
\begin{split}
p(\varepsilon_{k})
& = \binom{n}{k}  \delta^{n-k} \left( 1 - \delta \right)^{k} \\
\end{split}
\end{equation*}

One can calculate expected change values:

\begin{equation*} 
\begin{split}
\mathbb{E}[\varepsilon]
& = \sum^{n}_{k=0}
\varepsilon_{k} p(\varepsilon_{k})
 \approx \sum^{n}_{k=0} \left( \frac{n-k}{n} \right) \binom{n}{k}  \delta^{n-k} \left( 1 - \delta \right)^{k}
 = \delta \\
\end{split}
\end{equation*}

However if one assumes that the target function inside the updated region of width $\delta$ is correct, with probability $\delta^{n}$ of sampling from the entire input-domain, then the expected off-target error should be:

\begin{equation*} 
\begin{split}
\text{Expected off-target error}
& \approx \delta - \delta^{n} \\
\end{split}
\end{equation*}

This seems consistent with some of the experimental results, but further investigation is needed.

\section{Conclusion}
The main contribution of the paper is theoretical and technical. A representation theorem is presented that outlines how to approximate multi-variable functions with single-variable functions (splines and exponential functions). ATLAS approximates all arbitrary single-variable functions with mixtures of B-spline functions. ATLAS is constructed in such a way that the gradient vector with respect to trainable parameters is bounded, regardless of how large an ATLAS model is. The activation of units in ATLAS is sparse, and allowed for an efficient implementation that only computes non-zero activation values with the aid of embedding layers. The gradient update vector with respect to trainable parameters is orthogonal for different inputs as long as the inputs are sufficiently different from each other.

For every output dimension $p$ in an ATLAS model, there are $2M$ exponentials. Each exponential has $n$ single variable functions, with at most $2\lambda$ cubic B-spline basis functions each. ATLAS models have time complexity $\mathcal{O}(pMn\log\lambda )$, and $\mathcal{O}(pMn\lambda)$ space complexity.

ATLAS was shown to exhibit some memory retention, without the assistance of other techniques. This is a good indication of the potential for combining it with other techniques and models for continual learning. The chosen experiments demonstrated the theoretically derived predictions and contrasted two models, incuding a variant of ATLAS without distal orthogonality guarantees.

As far as societal impacts are concerned: It is possible that ATLAS could allow for the creation of more powerful machine learning algorithms, that require less resources to train and deploy. Further testing is needed to make any concrete claim.

\bibliography{atlas}
\appendix
\section{Additional experiments}\label{experiments}

Additional experimentation was performed with specific target functions. Experiment A was performed on a personal laptop with a 7th generation i7 Intel processor and took a few hours to finish thirty trials. The loss function chosen for training and evaluation is the mean absolute error (MAE). The training data set and test set in all experiments had $10 000$ data points, sampled uniformly at random. Gaussian noise with standard deviation $=0.1$ was added to all training and test data target values. The test set was also used as a validation set to quantify the test error during training. All models were trained with a learning rate of $0.01$ with the Adam optimizer. All models and experiments used batch sizes of 100 during training. 

To test memory retention, two tasks, presented to an Atlas model one after the other, were constructed. The details of each task are given below.

\paragraph{Task 1} 

The training and test sets were sampled uniformly from the Task 1 target function over the domain $\left[0.,1. \right]^{2}$, with Gaussian noise added to the target values. The initial Atlas model was instantiated as a two-variable function that maps to a one-dimensional output, with $r=0$ and $M=0$ such that it is a minimally expressive model. The model was evaluated and trained for 30 epochs. After training, the Atlas model was expanded using the built-in methods, such that $r$ is increased by one, and $M$ is increased by two: $r'=r+1$ and $M'=M+2$. This training-expansion process was repeated four times. The output of the model is presented at the end of each expansion iteration. The target functions for Task 1 in each experiment is labeled $Y(\vec{\mathbf{x}})$.

\paragraph{Task 2} 

The test sets were sampled uniformly from the Task 2 target function over the domain $\left[0.,1. \right]^{2}$, with Gaussian noise added to the target values. The training sets were sampled uniformly over the domain $\left[0.45,0.55 \right]^{2}$, and target values of zero with added Gaussian noise. All models were trained for $6$ epochs. The Task 2 target function in each experiment is given by:

$$ Y'(\vec{\mathbf{x}}) =\begin{cases} 
      0 &  0.45 < x_{i} < 0.55 \; \forall i=1,2,...         \\
      Y(\vec{\mathbf{x}}) & \text{otherwise.} 
   \end{cases}
$$

Task 2 effectively tests if a model changes only where new data was presented, with off-target effects leading to larger test MAE.

\subsection{Experiment A}

\subsubsection{Task 1}
Where the radius is measured from the centre of the domain $[0.5,0.5]$, with the analytical expression $r = \sqrt{ (x_{1}-\frac{1}{2})^{2} + (x_{2}-\frac{1}{2})^{2} }$, and $\theta = \tan^{-1}\left((x-\frac{1}{2})^{2},(y-\frac{1}{2})^{2}\right) $. The Task 1 target function of Experiment A is given by:

\begin{equation*} 
\begin{split}
Y_{A} &= Y_{A}(x_{1},x_{2}) =
\sin{(30r+\theta )}+2 
\end{split}
\end{equation*}

\begin{figure}[!h]
\centering
\includegraphics[width=0.75\linewidth]{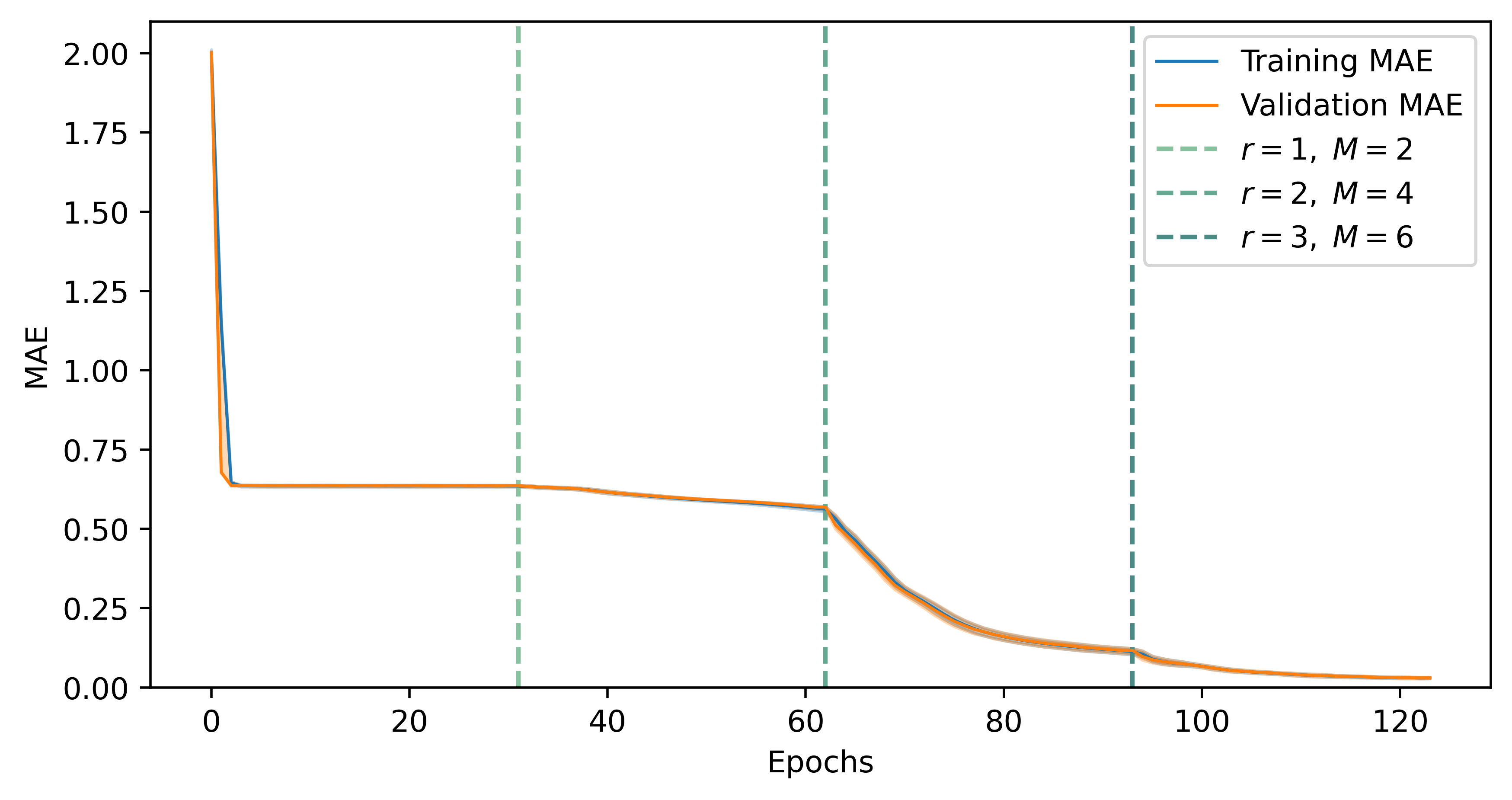} 
\caption{Training and validation MAE during the course of training on Task 1, Experiment A.}
\end{figure}

\begin{figure}[!h]
\centering
\includegraphics[width=0.95\linewidth]{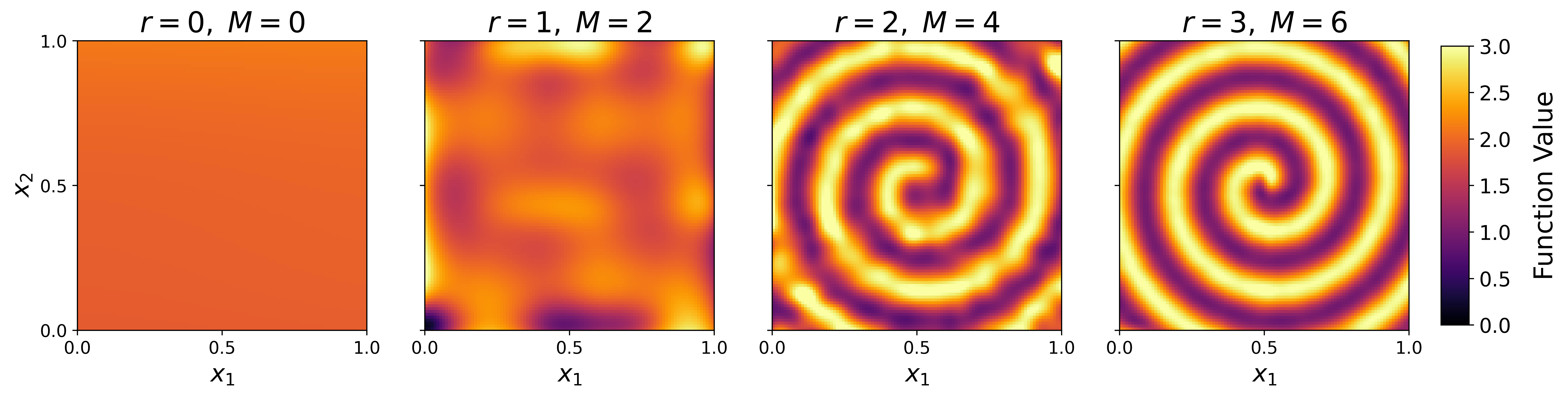}
\caption{Outputs of the model during successive training and expansion iterations, Experiment A.}
\end{figure}

\subsubsection{Task 2} The under-sampled target function $Y'_{A}$ used for validation is given by:

$$ Y'_{A}(x_{1},x_{2}) =\begin{cases} 
      0 &  0.45 < x_{i} < 0.55 \; \forall i=1,2,...         \\
      Y_{A}(x_{1},x_{2}) & \text{otherwise.} 
   \end{cases}
$$

\begin{figure}[!h]
\centering
\includegraphics[width=0.75\linewidth]{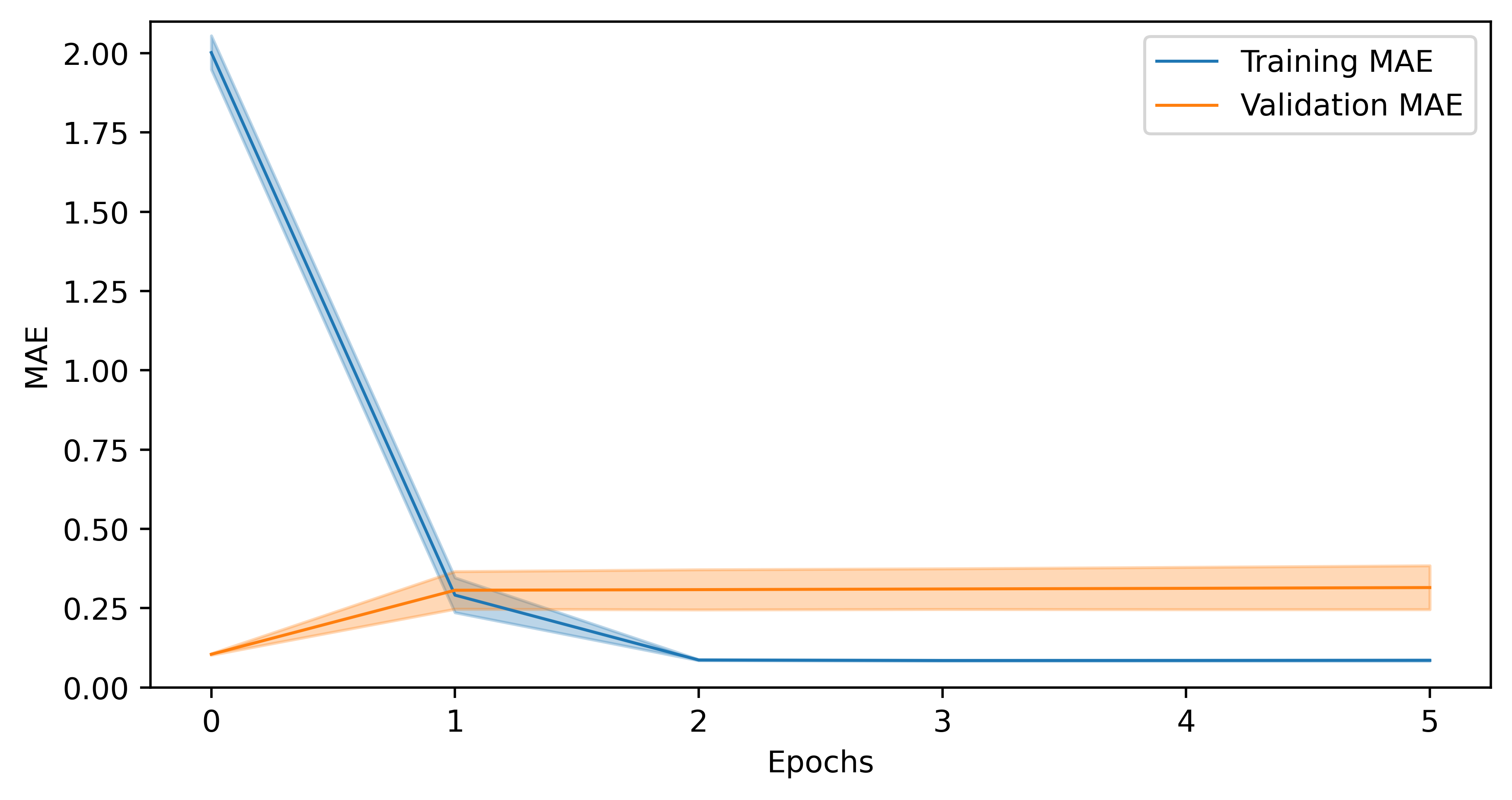}
\caption{Training and validation MAE during the course of training on Task 2, Experiment A.}
\end{figure}

\begin{figure}[!h]
\centering
\includegraphics[width=0.95\linewidth]{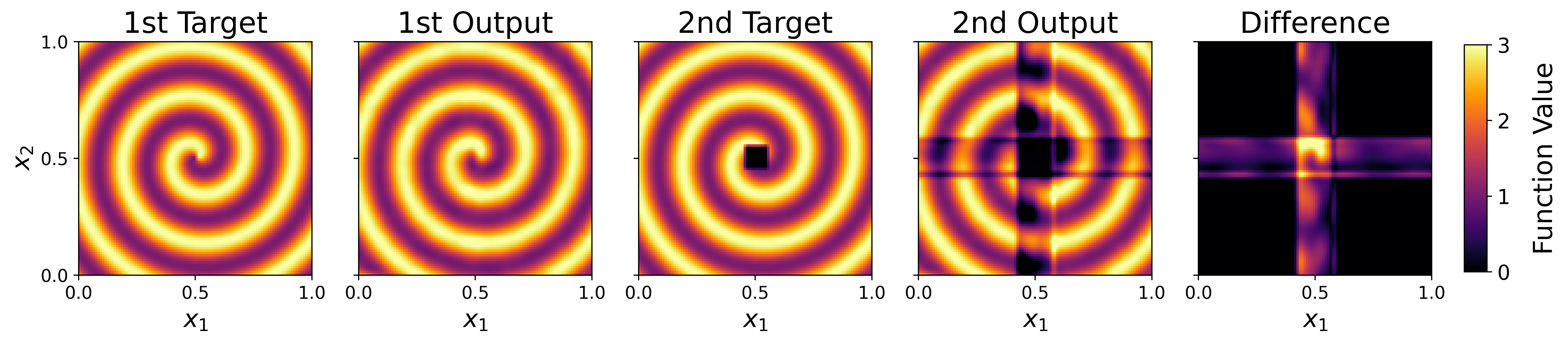}
\caption{Visual inspection of target functions and model outputs over Task 1 and Task 2, Experiment A.}
\end{figure}

\subsection{Experiment B}

\subsubsection{Task 1}
The Task 1 target function for experiment B is given by:

\begin{equation*} 
\begin{split}
Y_{B}(x_{1},x_{2}) &=
\cos^{2}{(20x_{1}-10)}+\cos^{2}{(10x_{2}-5)}
+\exp(-(20x_{1}-10)^{2}-(20x_{2}-10)^{2})
\end{split}
\end{equation*}

\begin{figure}[!h]
\centering
\includegraphics[width=0.75\linewidth]{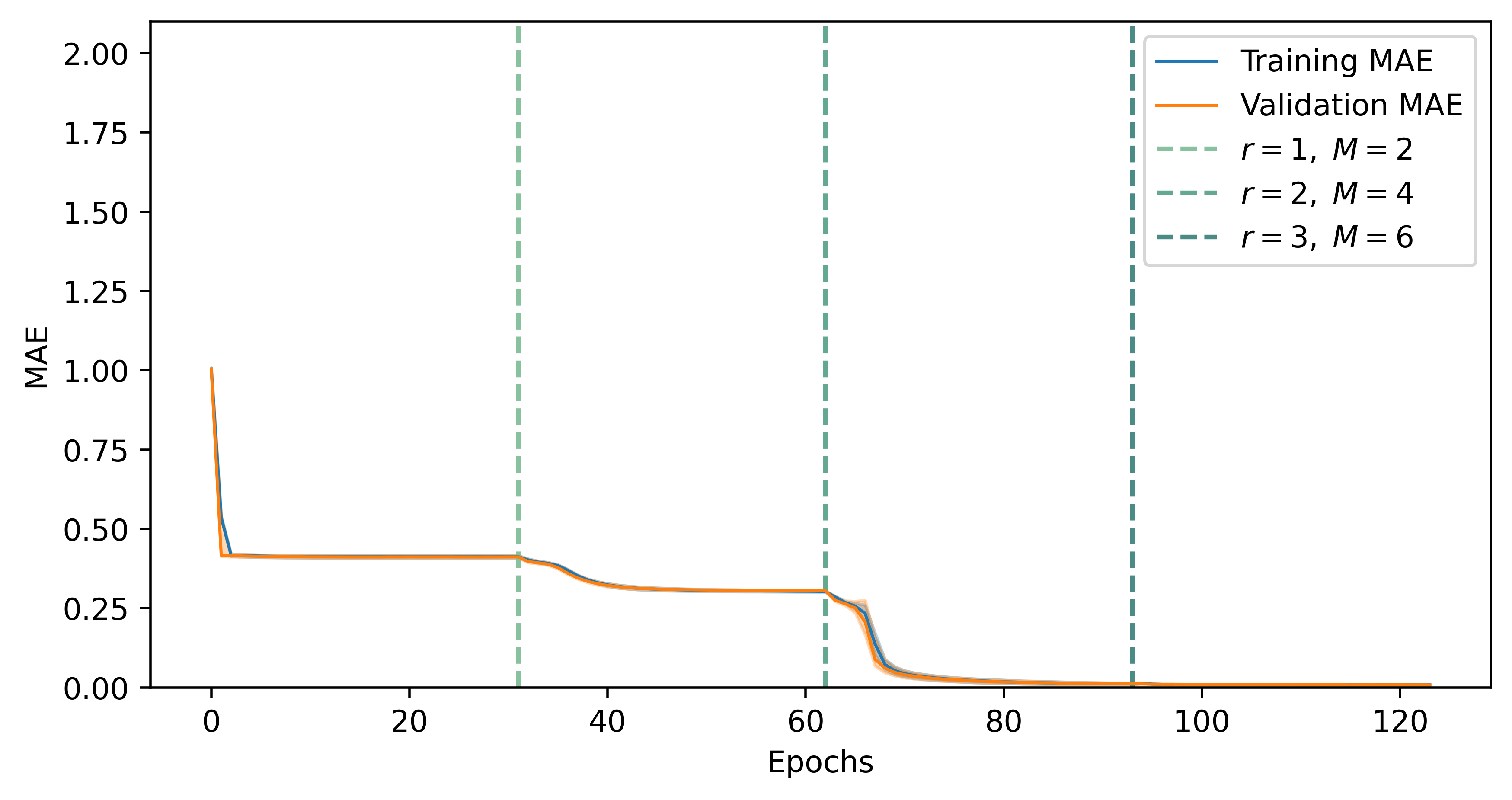} 
\caption{Training and validation MAE during the course of training on Task 1, Experiment B.}
\end{figure}

\begin{figure}[!h]
\centering
\includegraphics[width=0.95\linewidth]{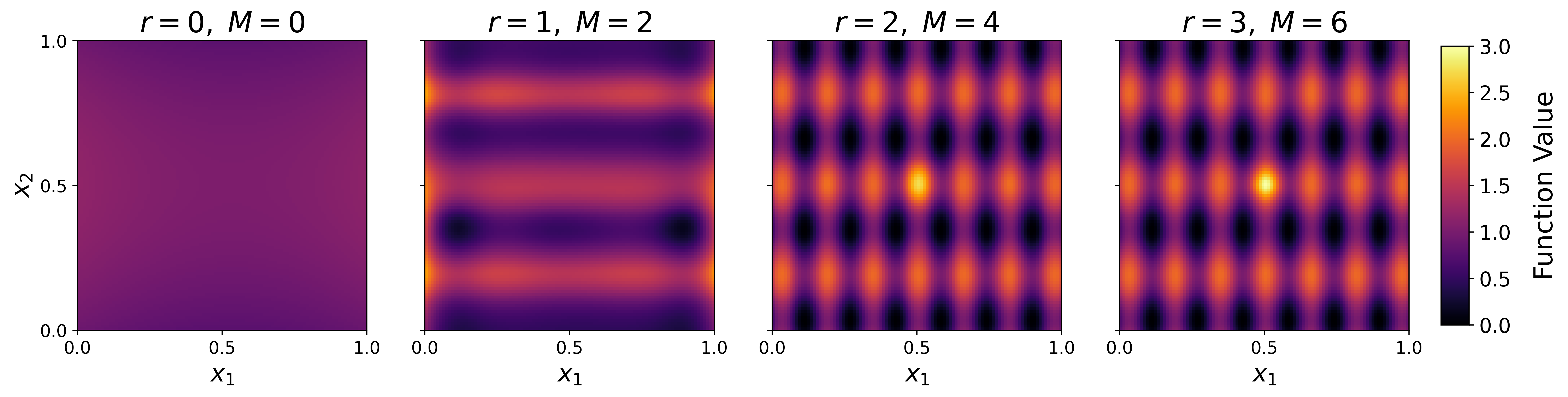}
\caption{Outputs of the model during successive training and expansion iterations, Experiment B.}
\end{figure}

\subsubsection{Task 2} The under-sampled target function $Y'_{B}$ used for validation is given by:

$$ Y'_{B}(x_{1},x_{2}) =\begin{cases} 
      0 &  0.45 < x_{i} < 0.55 \; \forall i=1,2,...         \\
      Y_{B}(x_{1},x_{2}) & \text{otherwise.} 
   \end{cases}
$$

\begin{figure}[!h]
\centering
\includegraphics[width=0.75\linewidth]{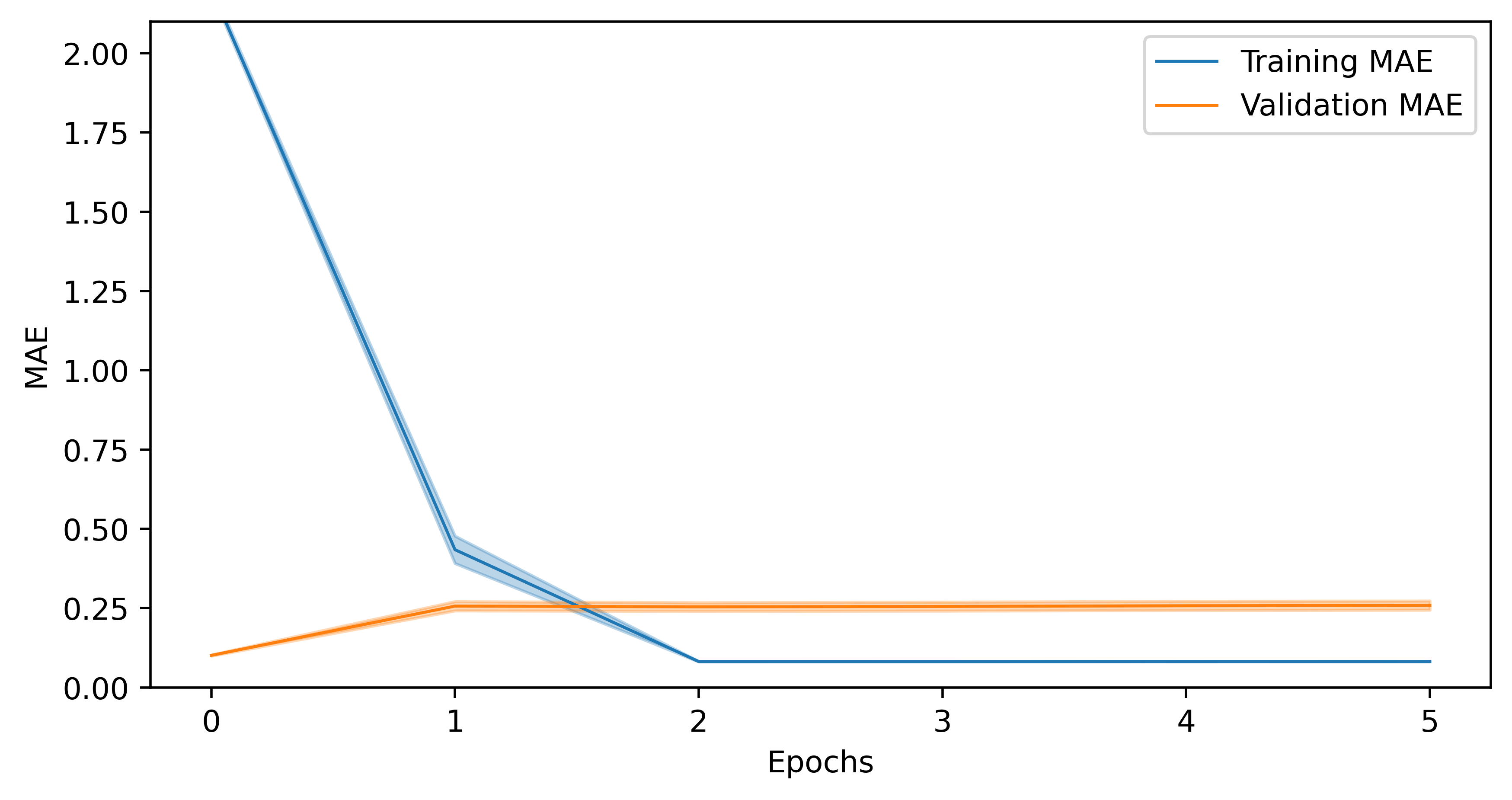}
\caption{Training and validation MAE during the course of training on Task 2, Experiment B}
\end{figure}

\begin{figure}[!h]
\centering
\includegraphics[width=0.95\linewidth]{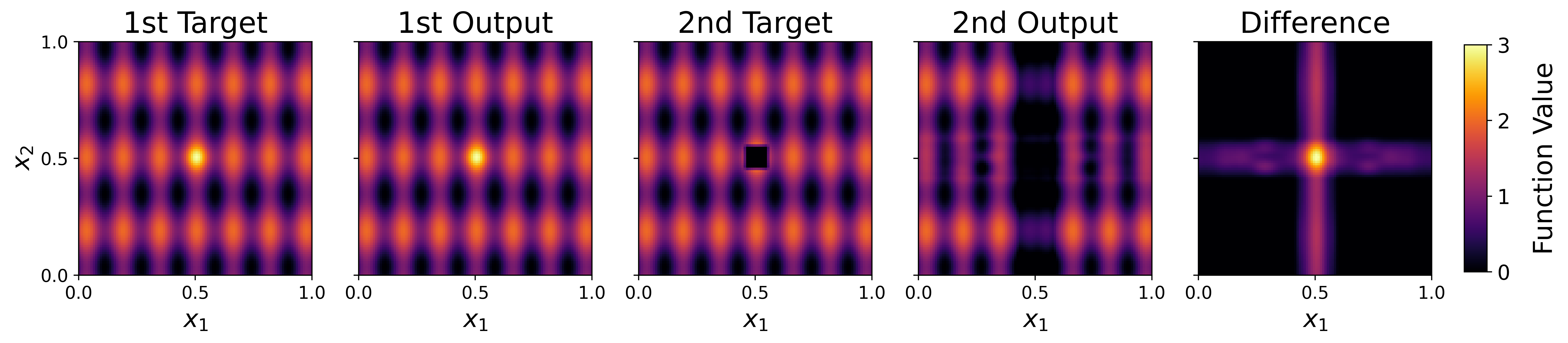}
\caption{Visual inspection of target functions and model outputs over Task 1 and Task 2, Experiment B.}
\end{figure}

\newpage
\subsection{Experiment C}

\subsubsection{Task 1}
The Task 1 target function for Experiment C is given by:

\begin{equation*} 
\begin{split}
Y_{C}(x_{1},x_{2}) &= 2 + \cos{(20x_{1}-10)}\cos{(20x_{2}-10)}
\end{split}
\end{equation*}

\begin{figure}[!h]
\centering
\includegraphics[width=0.75\linewidth]{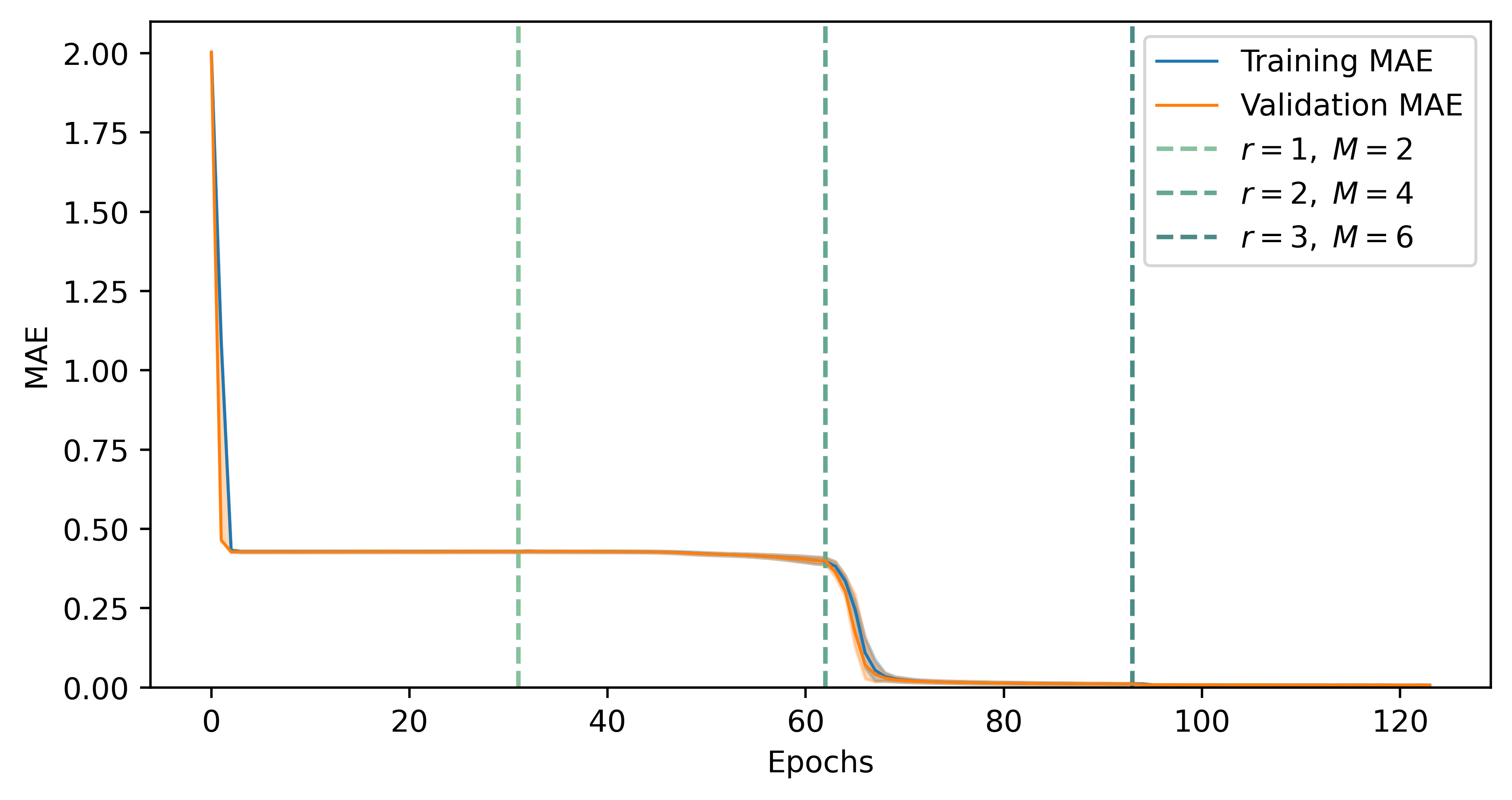} 
\caption{Training and validation MAE during the course of training on Task 1, Experiment C.}
\end{figure}

\begin{figure}[!h]
\centering
\includegraphics[width=0.95\linewidth]{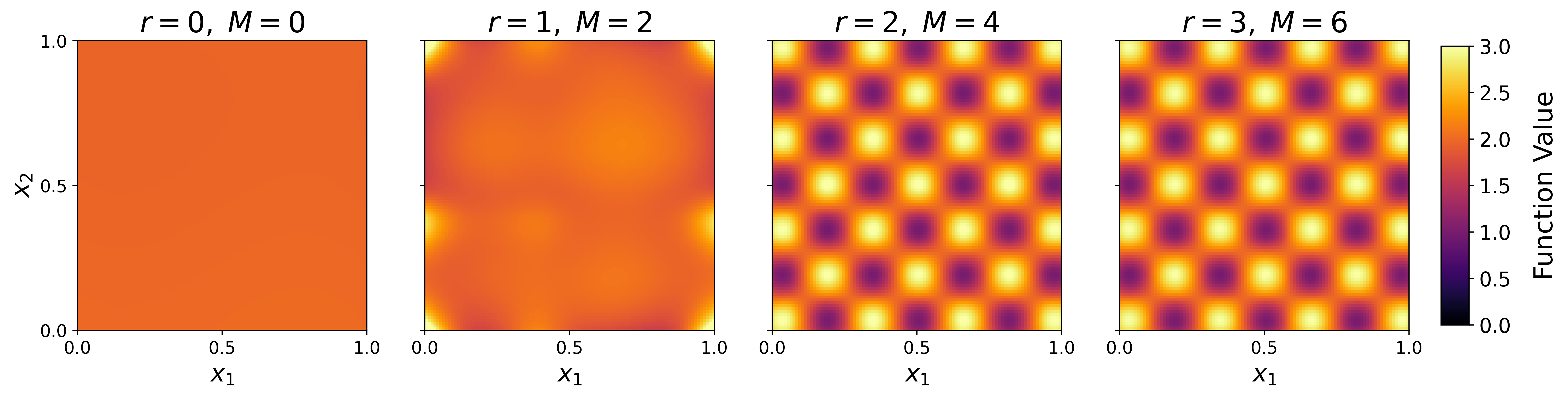}
\caption{Outputs of the model during successive training and expansion iterations, Experiment C.}
\end{figure}

\subsubsection{Task 2} The under-sampled target function $Y'_{C}$ used for validation is given by:

$$ Y'_{C}(x_{1},x_{2}) =\begin{cases} 
      0 &  0.45 < x_{i} < 0.55 \; \forall i=1,2,...         \\
      Y_{C}(x_{1},x_{2}) & \text{otherwise.} 
   \end{cases}
$$

\begin{figure}[!h]
\centering
\includegraphics[width=0.75\linewidth]{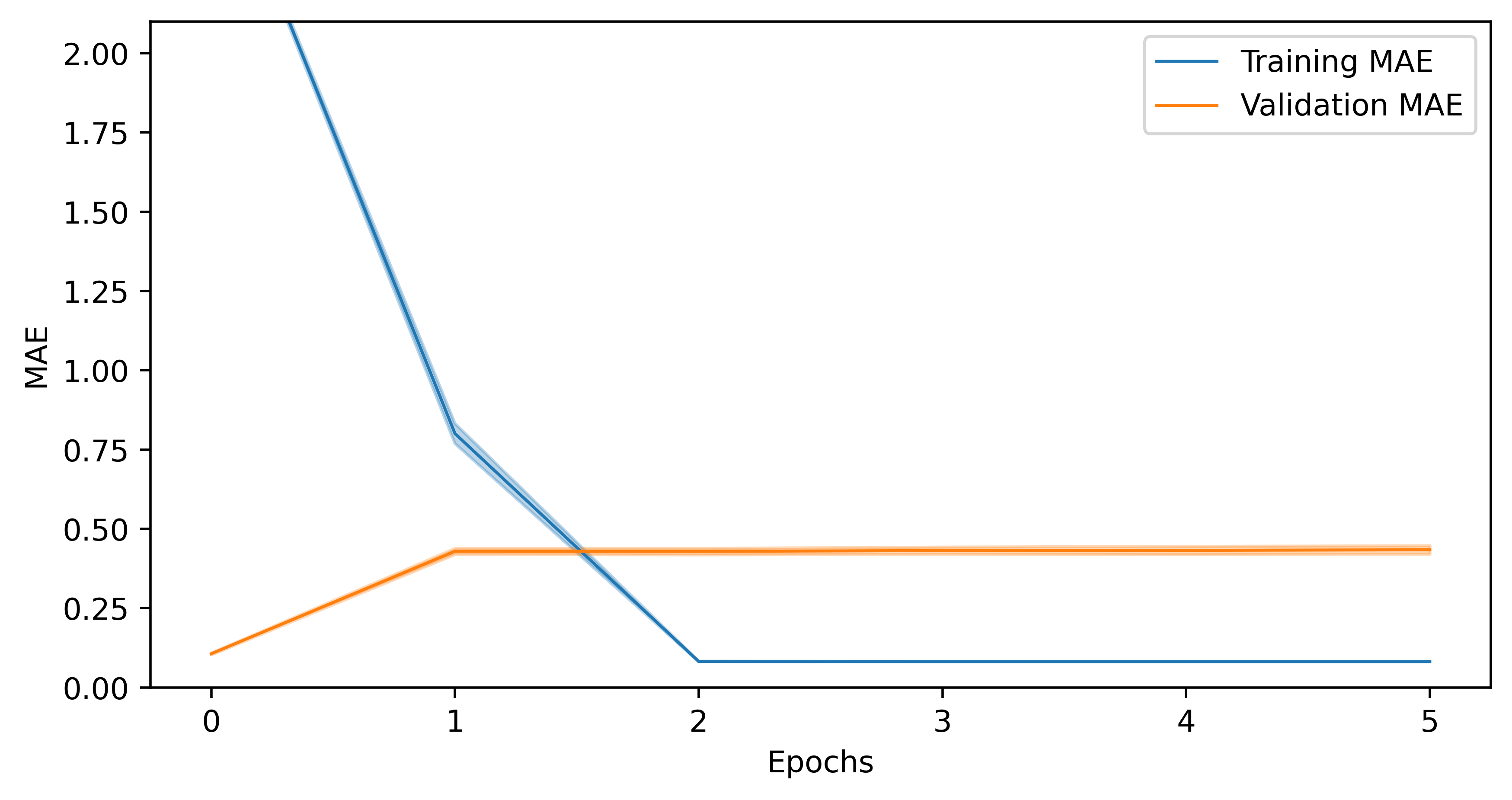}
\caption{Training and validation MAE during the course of training on Task 2, Experiment C.}
\end{figure}

\begin{figure}[!h]
\centering
\includegraphics[width=0.95\linewidth]{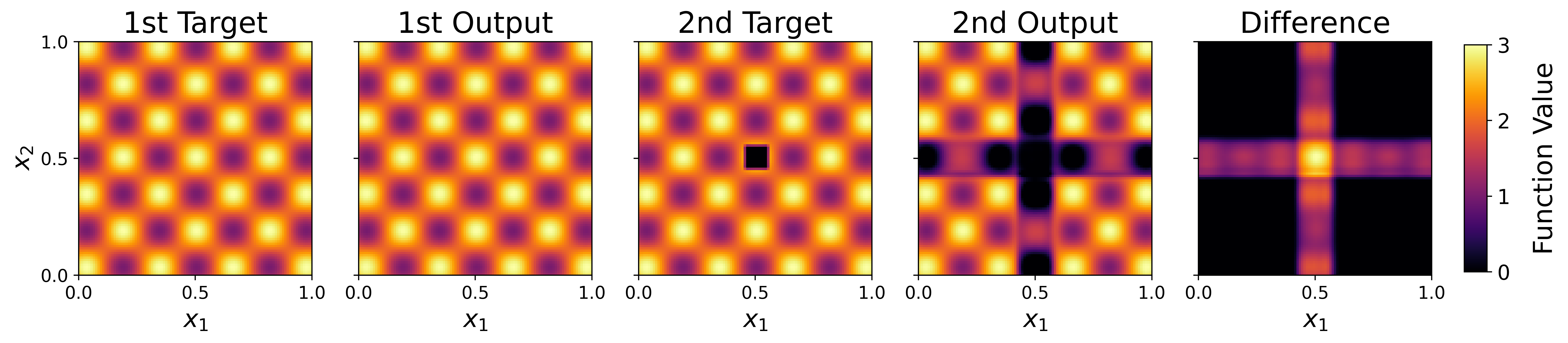}
\caption{Visual inspection of target functions and model outputs over Task 1 and Task 2, Experiment C.}
\end{figure}

\newpage
\subsection{Experiment D}

\subsubsection{Task 1}
Where $\sigma$ is the sigmoid function, the Task 1 target function for experiment D is given by:

\begin{equation*} 
\begin{split}
Y_{D}(x_{1},x_{2}) &= 2 + \sigma(\sin{(2 \pi x_{1})}\sin{(2 \pi x_{2})})
\end{split}
\end{equation*}

\begin{figure}[!h]
\centering
\includegraphics[width=0.75\linewidth]{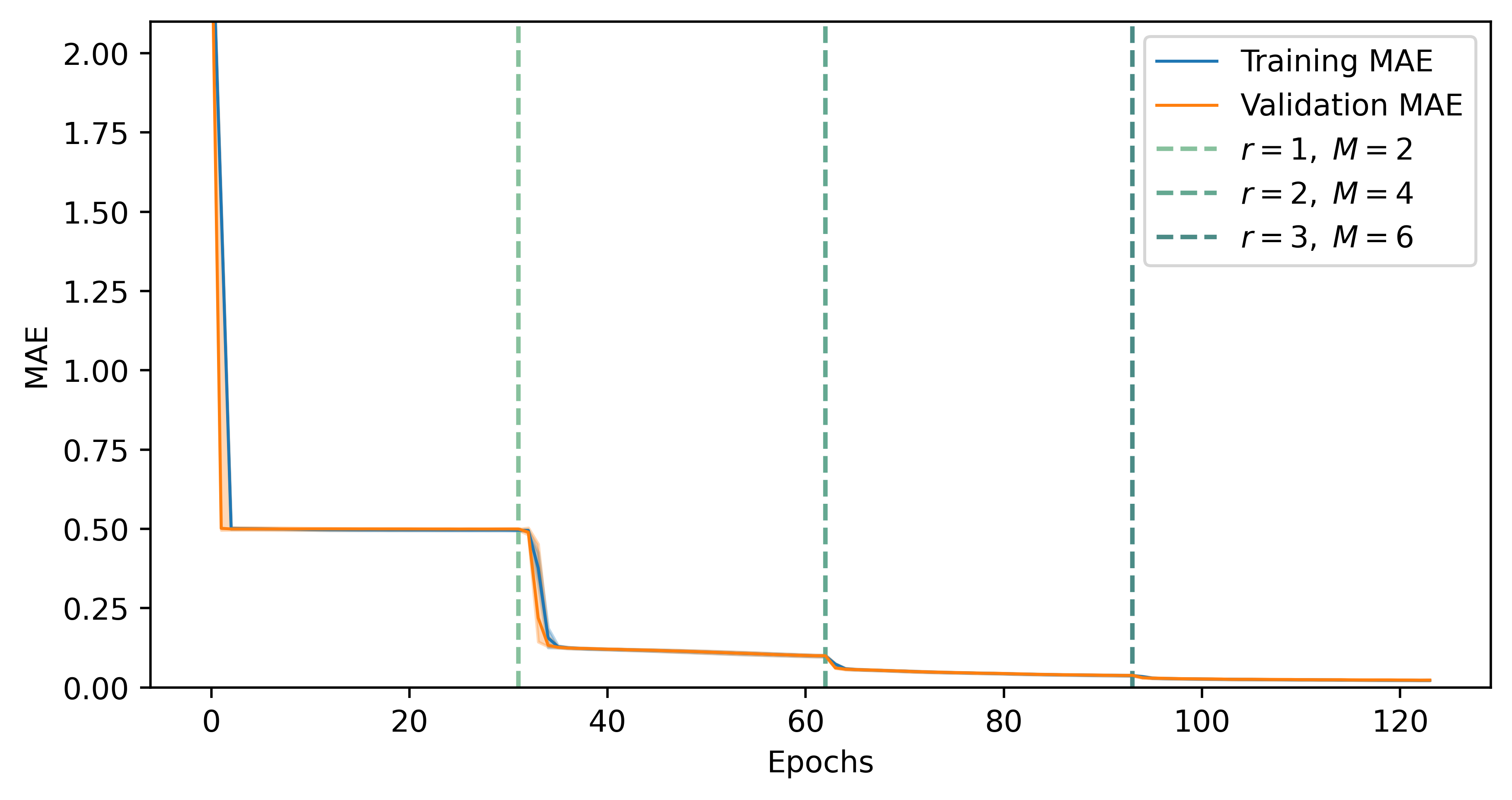} 
\caption{Training and validation MAE during the course of training on Task 1, Experiment D.}
\end{figure}

\begin{figure}[!h]
\centering
\includegraphics[width=0.95\linewidth]{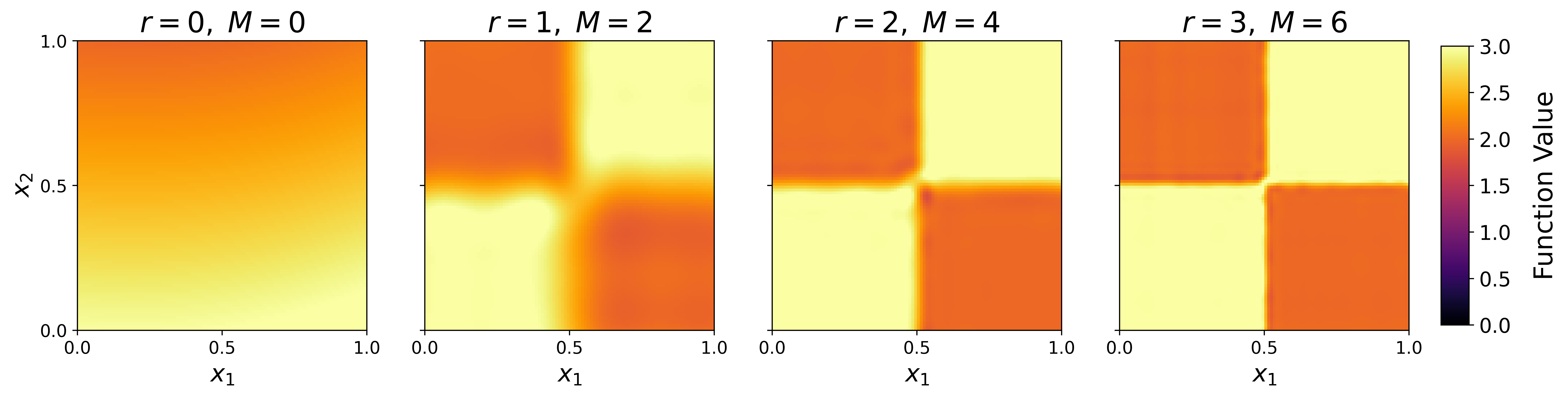}
\caption{Outputs of the model during successive training and expansion iterations, Experiment D.}
\end{figure}

\newpage
\subsubsection{Task 2} The under-sampled target function $Y'_{D}$ used for validation is given by:

$$ Y'_{D}(x_{1},x_{2}) =\begin{cases} 
      0 &  0.45 < x_{i} < 0.55 \; \forall i=1,2,...         \\
      Y_{D}(x_{1},x_{2}) & \text{otherwise.} 
   \end{cases}
$$

\begin{figure}[!h]
\centering
\includegraphics[width=0.75\linewidth]{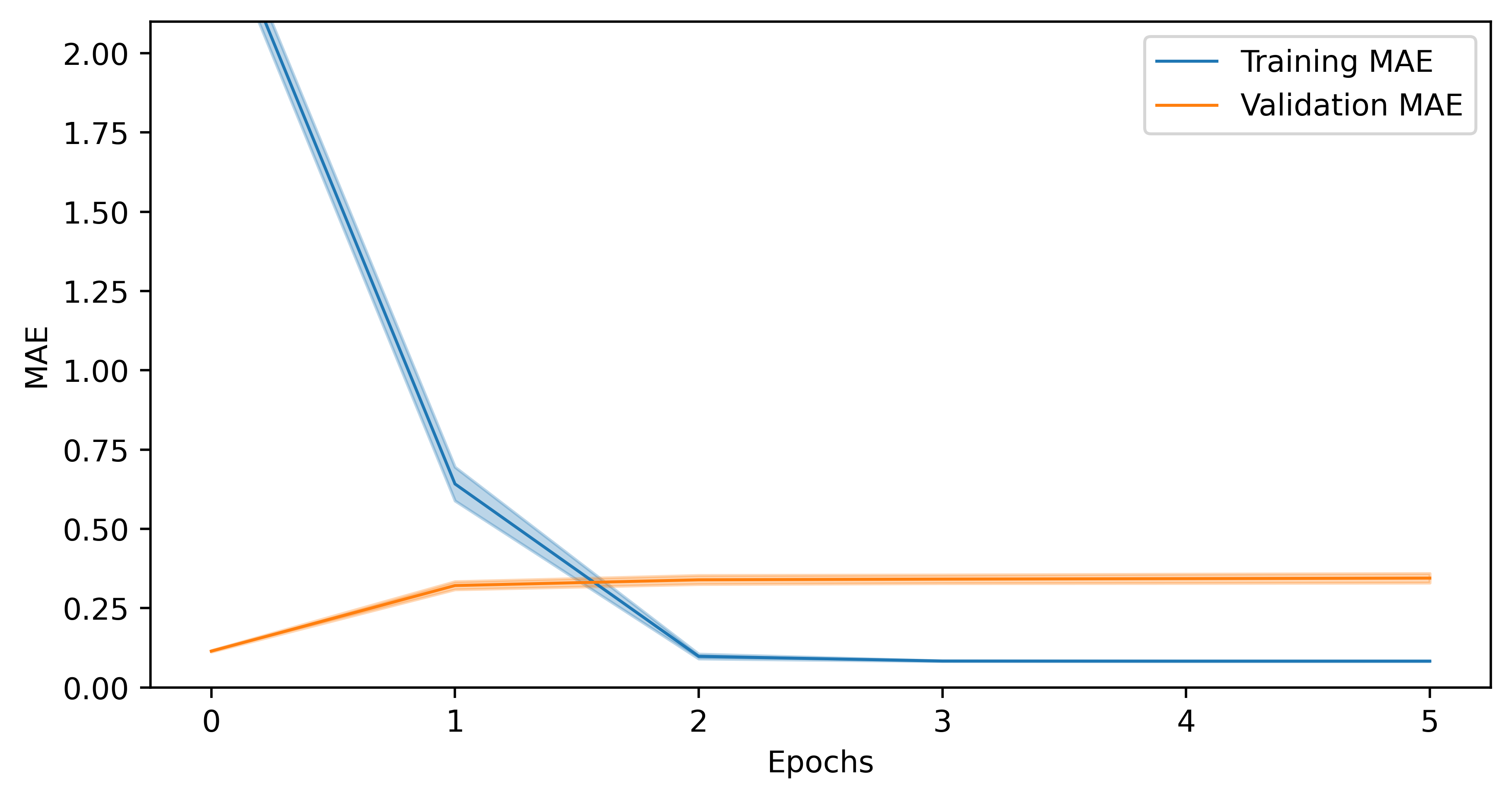}
\caption{Training and validation MAE during the course of training on Task 2, Experiment D.}
\end{figure}

\begin{figure}[!h]
\centering
\includegraphics[width=0.95\linewidth]{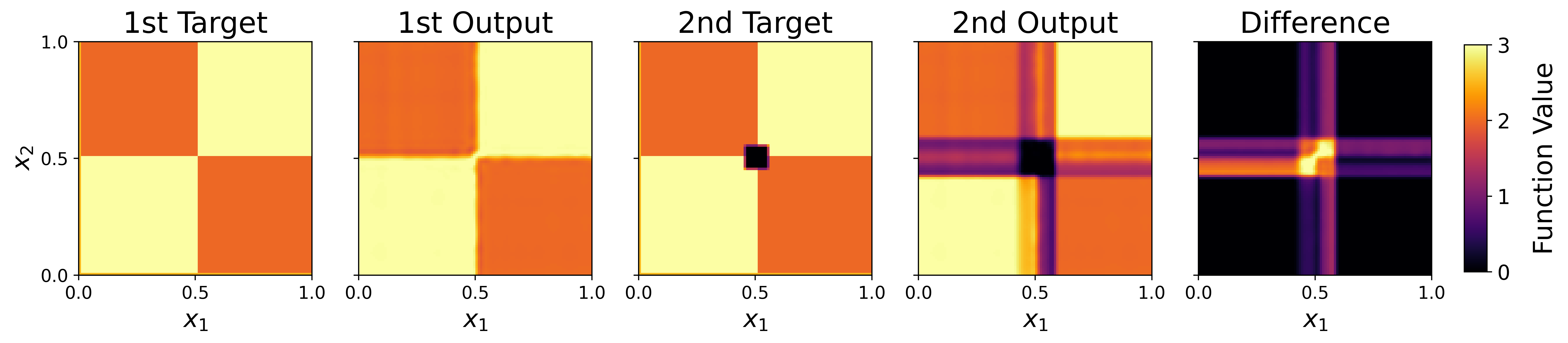}
\caption{Visual inspection of target functions and model outputs over Task 1 and Task 2, Experiment D.}
\end{figure}

\section{Analysis and mathematical proofs}\label{proofs}
\setcounter{theorem}{0}
\setcounter{lemma}{0}
\setcounter{corollary}{0}
\setcounter{definition}{0}
\setcounter{property}{0}

List of all definitions, theorems, corollaries, properties and their proofs are presented for completeness. The numbering of all statements match the main body of the paper. Some of the important results are also presented in the main body of the paper.
\subsection{Stone-Weierstrass Theorem}

Any continuous multi-variable function on a compact space can be uniformly approximated with multi-variable polynomials by the Stone-Weierstrass Theorem. Let $\mathcal{I}$ denote an index set of tuples of natural numbers including zero such that $i_{j} \in \mathbb{N}^{0}$ for all $j \in \mathbb{N}$ with $i = (i_{1},..,i_{n}) \in \mathcal{I}$ and $a_{i} \in \mathbb{R}$. Multi-variable polynomials can be represented as:

$$
y(\vec{\mathbf{x}}) 
= y(x_{1},..,x_{n} ) 
= \sum_{i \in \mathcal{I} } a_{i}  x_{1}^{i_{1}} x_{2}^{i_{2}} ... x_{n}^{i_{n}}
= \sum_{i \in \mathcal{I} } a_{i} \Pi_{j=1}^{n} x_{j}^{i_{j}} 
$$

Each monomial term $a_{i} \Pi_{j=1}^{n} x_{j}^{i_{j}} $ is a product of single-variable functions in each variable.

\subsection{Exponential Representation Theorem}

\begin{lemma}
For any $a_{i} \in \mathbb{R}$, there exists $\gamma_{i}>0$ and $\beta_{i}>0$, such that: 
$a_{i}= \gamma_{i} - \beta_{i}$
\end{lemma}

\begin{proof}
Let $a_{i} \in \mathbb{R}$. Three cases are considered.

If $a_{i} = 0$, then choose $\gamma_{i}=1>0$ and $\beta_{i}=1>0$, such that: $\gamma_{i} - \beta_{i} = 1-1=0=a_{i}$

If $a_{i} > 0$, then choose $\gamma_{i}= a_{i}+1>0$ and $\beta_{i}=1>0$, such that: $\gamma_{i} - \beta_{i} = a_{i}+1-1=a_{i}$

If $a_{i} < 0$, then choose $\gamma_{i}= 1>0$ and $\beta_{i}=1+|a_{i}|>0$, such that: 

$$\gamma_{i} - \beta_{i} = 1-(1+|a_{i}|)=1-1-|a_{i}|=a_{i}$$

\end{proof}


\begin{theorem}[Exponential representation theorem]
\label{thm_exp_rep}
Any multi-variable polynomial function $p(\vec{\mathbf{x}})$ of $n$ variables over the positive orthant, can be exactly represented by continuous single-variable functions $g_{i,j}(x_{j})$ and $h_{i,j}(x_{j})$ in the form:

\begin{equation*}
\begin{split}
p(\vec{\mathbf{x}}) 
= \sum_{i \in \mathcal{I} } \exp( \Sigma_{j=1}^{n} g_{i,j}(x_{j}))    - \exp( \Sigma_{j=1}^{n} h_{i,j}(x_{j}))
\end{split}
\end{equation*}

\end{theorem}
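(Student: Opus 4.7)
The plan is to prove this directly by constructing the representation monomial by monomial, using Lemma 1 to handle sign and the logarithm to convert products into sums. I begin from the Stone-Weierstrass-style expansion already displayed above the theorem, namely $p(\vec{\mathbf{x}}) = \sum_{i \in \mathcal{I}} a_i \prod_{j=1}^{n} x_j^{i_j}$, and treat each monomial term separately so that the summation index $i$ in the theorem statement matches the monomial index.

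For a fixed $i \in \mathcal{I}$, I would first apply Lemma 1 to write $a_i = \gamma_i - \beta_i$ with $\gamma_i, \beta_i > 0$, so that the monomial splits as $\gamma_i \prod_j x_j^{i_j} - \beta_i \prod_j x_j^{i_j}$. Since $\vec{\mathbf{x}}$ is restricted to the positive orthant, every $x_j > 0$, so $\log$ is well defined on each factor, and the key identity $t = \exp(\log t)$ for $t > 0$ lets me rewrite each positive monomial as
\begin{equation*}
\gamma_i \prod_{j=1}^{n} x_j^{i_j} = \exp\!\Bigl(\log \gamma_i + \sum_{j=1}^{n} i_j \log x_j\Bigr),
\end{equation*}
and similarly for the $\beta_i$ term. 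At this point the argument of each exponential is already a sum of single-variable continuous functions of the $x_j$ plus a single constant $\log \gamma_i$ (resp.\ $\log \beta_i$).

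The next step is to package these into the canonical form of the theorem, $\exp(\sum_j g_{i,j}(x_j))$. I would do this by absorbing the constant into one of the coordinate functions, for example setting $g_{i,1}(x_1) := \log \gamma_i + i_1 \log x_1$ and $g_{i,j}(x_j) := i_j \log x_j$ for $j \geq 2$, and analogously $h_{i,1}(x_1) := \log \beta_i + i_1 \log x_1$, $h_{i,j}(x_j) := i_j \log x_j$. Each of these is a continuous single-variable function on the positive orthant, which is exactly what the theorem requires. Summing over $i \in \mathcal{I}$ then gives the displayed representation.

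The main obstacle, and the only subtle point, is the dependence on strict positivity: the logarithm blows up as $x_j \to 0^+$, so the constructed $g_{i,j}$ and $h_{i,j}$ are continuous on the open positive orthant but not extendable continuously to the closed orthant. This is consistent with the hypothesis (``over the positive orthant''), but I would flag it explicitly and note that distributing $\log \gamma_i$ among the coordinate functions is a choice made without loss of generality — any splitting whose sum recovers $\log \gamma_i$ works, so the representation is not unique. Apart from this caveat, the argument is a straightforward exp/log calculation combined with Lemma 1, exactly mirroring the proof sketch already given in the main body.
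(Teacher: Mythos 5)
Your proposal is correct and follows essentially the same route as the paper's proof: split each coefficient via Lemma 1, apply the exp/log identity on the positive orthant, and absorb the constant into the single-variable functions (the paper does this implicitly "without loss of generality," while you make the choice explicit). The only addition is your remark on non-extendability of $\log$ to the boundary, which is a fair caveat but does not change the argument.
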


\begin{proof}
Consider any monomial term $a_{i} \Pi_{j=1}^{n} x_{j}^{i_{j}} $ with $a_{i} \in \mathbb{R}$, then by Lemma 1 there exist strictly positive numbers $\gamma_{i}>0$ and $\beta_{i}>0$, such that: 

\begin{equation*} 
\begin{split}
a_{i} \Pi_{j=1}^{n} x_{j}^{i_{j}}  
& = \gamma_{i} \Pi_{j=1}^{n} x_{j}^{i_{j}}  - \beta_{i} \Pi_{j=1}^{n} x_{j}^{i_{j}}   \\
& = \exp(\log(\gamma_{i} \Pi_{j=1}^{n} x_{j}^{i_{j}} )) - \exp(\log(\beta_{i} \Pi_{j=1}^{n} x_{j}^{i_{j}} )) \\
& = \exp(\log(\gamma_{i}) + \Sigma_{j=1}^{n} \log(x_{j}^{i_{j}} )) 
    - \exp(\log(\beta_{i}) +\Sigma_{j=1}^{n} \log(x_{j}^{i_{j}} )) \\
\end{split}
\end{equation*}

The argument of each exponential function is a sum of single-variable functions and constants. Without loss of generality, a set of single-variable functions can be defined such that:

\begin{equation*} 
\begin{split}
a_{i} \Pi_{j=1}^{n} x_{j}^{i_{j}}  
& = \exp( \Sigma_{j=1}^{n} g_{i,j}(x_{j}))
    - \exp( \Sigma_{j=1}^{n} h_{i,j}(x_{j})) 
\end{split}
\end{equation*}

Since this holds for any $a_{i} \Pi_{j=1}^{n} x_{j}^{i_{j}} $ and all $i \in \mathcal{I}$, it follows that:

\begin{equation*} 
\begin{split}
p(\vec{\mathbf{x}}) 
= \sum_{i \in \mathcal{I} } \exp( \Sigma_{j=1}^{n} g_{i,j}(x_{j}))    - \exp( \Sigma_{j=1}^{n} h_{i,j}(x_{j}))
\end{split}
\end{equation*}
\end{proof}

There is a duality between representation and approximation. If any multi-variable polynomial can be exactly represented, then any continuous multi-variable function can be approximated to arbitrary accuracy. 

\subsubsection{Exponential approximation corollary}
\begin{corollary} [Exponential approximation]
For any $\varepsilon>0$, and continuous multi-variable function $y(\vec{\mathbf{x}})$ of $n$ variables over a compact domain in the positive orthant, there exist continuous single-variable functions $g_{i,j}(x_{j})$ and $h_{i,j}(x_{j})$ such that:

\begin{equation*}
\begin{split}
\abs{
y(\vec{\mathbf{x}}) 
- \left(
 \sum_{i \in \mathcal{I} } \exp( \Sigma_{j=1}^{n} g_{i,j}(x_{j}))    - \exp( \Sigma_{j=1}^{n} h_{i,j}(x_{j})) 
\right)
}
< \varepsilon
\end{split}
\end{equation*}

\end{corollary}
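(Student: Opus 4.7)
The plan is to chain together two results that are already available: the Stone-Weierstrass Theorem stated at the top of this section, and the Exponential Representation Theorem (Theorem~1) that was just proved. Since the exponential representation is \emph{exact} for any multi-variable polynomial, any $\varepsilon$-approximation of $y$ by a polynomial automatically gives an $\varepsilon$-approximation by a function of the desired exponential form, so essentially no new estimates are needed.

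Concretely, I would proceed as follows. First, let $K$ denote the compact domain in the positive orthant on which $y$ is defined, and fix $\varepsilon > 0$. Invoke Stone-Weierstrass on $K$: since the algebra of multi-variable polynomials separates points, contains constants, and is closed under the ring operations, there exists a polynomial
\begin{equation*}
p(\vec{\mathbf{x}}) = \sum_{i \in \mathcal{I}} a_{i} \Pi_{j=1}^{n} x_{j}^{i_{j}}
\end{equation*}
with a finite index set $\mathcal{I}$ such that $|y(\vec{\mathbf{x}}) - p(\vec{\mathbf{x}})| < \varepsilon$ uniformly on $K$. Second, apply Theorem~1 to $p$: this produces continuous single-variable functions $g_{i,j}$ and $h_{i,j}$ (built from constants and $i_{j}\log(x_{j})$, which are well-defined and continuous because $K$ sits in the positive orthant) such that
\begin{equation*}
p(\vec{\mathbf{x}}) = \sum_{i \in \mathcal{I}} \exp\bigl( \Sigma_{j=1}^{n} g_{i,j}(x_{j})\bigr) - \exp\bigl( \Sigma_{j=1}^{n} h_{i,j}(x_{j})\bigr).
\end{equation*}
Substituting this exact representation into the Stone-Weierstrass bound yields the claimed inequality.

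The only real subtlety, and the place I would be most careful, concerns the domain hypothesis. The representation step uses $\log(x_{j})$, which requires $x_{j} > 0$. If "positive orthant" is interpreted as the open orthant $(0,\infty)^{n}$, then compactness of $K$ forces each coordinate to be bounded away from $0$, so the $g_{i,j}$ and $h_{i,j}$ are continuous (in fact bounded) on $K$ and everything goes through cleanly. If one wishes to allow coordinate values equal to $0$, the logarithms blow up while the monomials remain finite, and one would have to either restrict $\mathcal{I}$ to tuples with $i_{j} > 0$ for every $j$ (so that the monomial vanishes on the boundary and can be matched by letting the exponential tend to $0$ via $g_{i,j} \to -\infty$), or approximate via an interior exhaustion of $K$ and take a limit. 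I would flag this domain assumption explicitly and then proceed under the open-orthant reading, since that is what the representation-theorem proof already implicitly uses.
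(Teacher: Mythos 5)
Your proposal is correct and follows essentially the same route as the paper's own proof: invoke Stone--Weierstrass to obtain a polynomial $p$ with $|y(\vec{\mathbf{x}}) - p(\vec{\mathbf{x}})| < \varepsilon$, then apply Theorem~\ref{thm_exp_rep} to represent $p$ exactly in the exponential form and substitute. Your additional remark about the logarithms requiring the compact domain to avoid coordinate values equal to zero is a sensible clarification of a point the paper leaves implicit, but it does not change the argument.
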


\begin{proof}
Fix $\varepsilon>0$, and let $y(\vec{\mathbf{x}})$ be a continuous multi-variable function  of $n$ variables over a compact domain in the positive orthant. 

By the Stone–Weierstrass theorem there exists a multi-variable polynomial $p(\vec{\mathbf{x}})$ over the domain of $y(\vec{\mathbf{x}})$ such that: 

\begin{equation*}
\begin{split}
|y(\vec{\mathbf{x}}) - p(\vec{\mathbf{x}}) | < \varepsilon
\end{split}
\end{equation*}

By \autoref{thm_exp_rep}, for any polynomial $p(\vec{\mathbf{x}})$ over the positive orthant, there exist continuous single-variable functions $g_{i,j}(x_{j})$ and $h_{i,j}(x_{j})$ such that:

\begin{equation*}
\begin{split}
p(\vec{\mathbf{x}}) 
= \sum_{i \in \mathcal{I} } \exp( \Sigma_{j=1}^{n} g_{i,j}(x_{j}))    - \exp( \Sigma_{j=1}^{n} h_{i,j}(x_{j}))
\end{split}
\end{equation*}

It follows that:

\begin{equation*}
\begin{split}
\abs{
y(\vec{\mathbf{x}}) 
- \left(
 \sum_{i \in \mathcal{I} } \exp( \Sigma_{j=1}^{n} g_{i,j}(x_{j}))    - \exp( \Sigma_{j=1}^{n} h_{i,j}(x_{j})) 
\right)
}
< \varepsilon
\end{split}
\end{equation*}

\end{proof}

\subsection{Single-variable function approximators}
Each basis function $S_{i}$ for a uniform cubic B-spline can be obtained by scaling and translating the input of the same activation function. The activation function is denoted $S(x)$ and is given by:

$$ S(x) =\begin{cases} 
      \frac{1}{6} x^{3} &  0 \leq x < 1\\
      \frac{1}{6} \left[-3(x-1)^{3} +3(x-1)^{2} +3(x-1) + 1 \right] &  1 \leq x < 2\\
      \frac{1}{6} \left[3(x-2)^{3} -6(x-2)^{2} + 4 \right]  & 2 \leq x < 3\\
      \frac{1}{6} ( 4-x ) ^{3} &  3 \leq x < 4\\
      0 & otherwise 
   \end{cases}
$$

\begin{figure}[!h]
\centering
\includegraphics[width=0.55\linewidth]{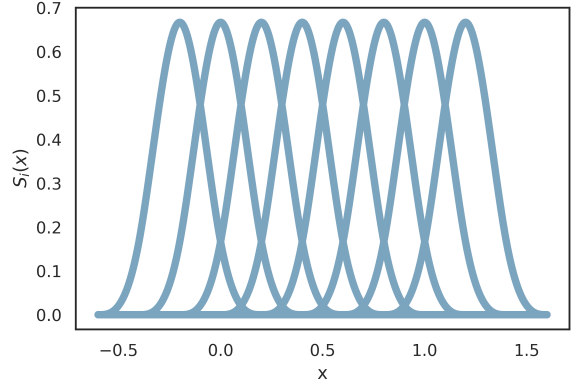}
\caption{Set of eight uniform cubic B-spline basis functions, where $S_{i}(x) = S(w_{i}x + b_{i})$.}
\label{fig:fig_distal_orhogonality}
\end{figure}

\subsubsection{Definition of $\rho$-density B-spline functions}

\begin{definition}[$\rho$-density B-spline function]
\label{rho_density_def}
A $\rho$-density B-spline function is a uniform cubic B-spline function with $2^{\rho+2}$ basis functions:
\begin{equation*} 
\begin{split}
f(x) 
= \sum_{i=1}^{2^{\rho+2}} \theta_{i} S_{i}(x) 
= \sum_{i=1}^{2^{\rho+2}} \theta_{i} S(w_{i}x + b_{i})
= \sum_{i=1}^{2^{\rho+2}} \theta_{i} S( (2^{\rho+2}-3)x + 4 - i )
\end{split}
\end{equation*}
\end{definition}

\subsubsection{Definition of mixed-density B-spline function}
\begin{definition}[mixed-density B-spline function]
\label{mixed_density_def}
A mixed-density B-spline function is a single-variable function approximator that is obtained by summing together different $\rho$-density B-spline functions. Only the maximum $\rho$-density B-spline function has trainable parameters, the others are constant. Mixed-density B-spline functions are of the form:
\begin{equation*}
    \begin{aligned}
f(x) 
&= \sum_{\rho=0}^{r} \sum_{i=1}^{ 2^{\rho+2}} \theta_{\rho,i} S_{\rho,i}(x) 
    \end{aligned}
\end{equation*}
\end{definition}

The maximum density parameters $\theta_{r,i}$ are trainable, but the lower density parameters $\theta_{\rho,i}$ (with $\rho<r$) are in general non-zero constants. The function approximator can be expanded without losing previously learned values. Analytically, we can choose all the new parameters $\theta_{r+1,i} = 0, \; \forall i \in \bf{N}$ such that:

\begin{equation*}
    \begin{aligned}
            f(x) 
&= \sum_{\rho=0}^{r} \sum_{i=1}^{ 2^{\rho+2}} \theta_{\rho,i} S_{\rho,i}(x)
= \sum_{\rho=0}^{r+1} \sum_{i=1}^{ 2^{\rho+2}} \theta_{\rho,i} S_{\rho,i}(x) 
    \end{aligned}
\end{equation*}


\subsection{Atlas architecture}

\subsubsection{Atlas representation theorem}

\begin{theorem}[Atlas representation theorem]
\label{thm_atlas_rep}
Any multi-variable polynomial $p(\vec{\mathbf{x}})$ of $n$ variables over the positive orthant, can be exactly represented by continuous single-variable functions $f_{j}(x_{j})$, $g_{i,j}(x_{j})$, and $h_{i,j}(x_{j})$ in the form:

\begin{equation*} 
\begin{split}
p(\vec{\mathbf{x}}) 
 = \sum_{j=1}^{n} f_{j}(x_{j}) + 
\sum_{k = 1}^{\infty} \frac{1}{k^{2}} \exp( \Sigma_{j=1}^{n} g_{k,j}(x_{j})) 
                - \frac{1}{k^{2}} \exp( \Sigma_{j=1}^{n} h_{k,j}(x_{j})) 
\end{split}
\end{equation*}

\end{theorem}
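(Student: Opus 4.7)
The plan is to reduce the ATLAS representation to the Exponential Representation Theorem (Theorem~\ref{thm_exp_rep}) already established. Given a polynomial $p(\vec{\mathbf{x}})$ over the positive orthant, Theorem~\ref{thm_exp_rep} produces a finite family of single-variable functions $\{g'_{i,j}, h'_{i,j}\}_{i\in\mathcal{I}}$ with $\mathcal{I}$ finite such that $p(\vec{\mathbf{x}}) = \sum_{i\in\mathcal{I}} \exp(\Sigma_j g'_{i,j}(x_j)) - \exp(\Sigma_j h'_{i,j}(x_j))$. The ATLAS form differs in three cosmetic-but-nontrivial ways: it carries an additive summand $\sum_j f_j(x_j)$, it scales each exponential by $k^{-2}$, and it indexes over $k\in\mathbb{N}$ rather than a finite set. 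I need to absorb these features without changing the value of $p$.

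First I would handle the additive summand trivially by setting $f_j(x_j) \equiv 0$ for every $j$. Next, enumerate $\mathcal{I} = \{i_1, i_2, \dots, i_N\}$ in any order. For each $k\in\{1,\dots,N\}$, I would define
\begin{equation*}
g_{k,j}(x_j) \;=\; g'_{i_k,j}(x_j) + \tfrac{2\log k}{n}, \qquad h_{k,j}(x_j) \;=\; h'_{i_k,j}(x_j) + \tfrac{2\log k}{n},
\end{equation*}
so that summing over $j$ introduces exactly an additive $2\log k$ inside each exponential, giving $\exp(\Sigma_j g_{k,j}(x_j)) = k^2 \exp(\Sigma_j g'_{i_k,j}(x_j))$. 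The prefactor $k^{-2}$ then cancels this correction and recovers the original $k$-th summand from Theorem~\ref{thm_exp_rep}. Note that for $k=1$ the shift is zero, and $\log k$ is well-defined for $k\ge 1$, so the $g_{k,j}, h_{k,j}$ are continuous single-variable functions on the positive orthant.

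For the tail $k > N$, I would set $g_{k,j}(x_j) = h_{k,j}(x_j) = 0$ (or any equal pair of constants); then $\exp(\Sigma_j g_{k,j}) - \exp(\Sigma_j h_{k,j}) = 0$, and every tail term contributes zero. The full sum thus collapses to a finite sum of $N$ nonzero terms equal to the Theorem~\ref{thm_exp_rep} representation of $p$, and convergence of the series is automatic. Assembling the pieces yields the desired ATLAS representation equal to $p(\vec{\mathbf{x}})$ pointwise on the positive orthant.

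The only genuine content in this proof is the weight-absorption trick that reconciles the normalising factor $k^{-2}$ with the $\exp$-of-sum-of-single-variable-functions structure; there is no analytic obstacle because the index set from Theorem~\ref{thm_exp_rep} is finite and the infinite series reduces to a finite one. The main thing to be careful about is that the shift $(2\log k)/n$ is absorbed into \emph{each} $g_{k,j}$ symmetrically so that each remains a legitimate single-variable function, and that the domain assumption (positive orthant) is inherited from Theorem~\ref{thm_exp_rep} since that is where the logarithms in its proof are defined.
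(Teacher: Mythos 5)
Your proof is correct and takes essentially the same route as the paper: invoke Theorem~\ref{thm_exp_rep}, reconcile the $k^{-2}$ prefactor by inserting the constant $2\log k$ into the exponential arguments (the paper writes $\exp(\log k^{2} - \log k^{2} + \cdots)$ and absorbs $\log k^{2}$ into redefined $g_{k,j}$, $h_{k,j}$, exactly your shift distributed as $\tfrac{2\log k}{n}$ per variable), and collapse the infinite series to a finite one. The only difference is minor: the paper first peels the single-variable monomials of $p$ off into the additive part $\sum_{j} f_{j}(x_{j})$ and applies Theorem~\ref{thm_exp_rep} only to the residual polynomial, whereas you set $f_{j} \equiv 0$ and exponentiate everything, and you make the zero-padding of the tail ($g_{k,j}=h_{k,j}=0$ for $k>N$) explicit where the paper merely says the countable index set can be re-indexed; both choices are legitimate since the theorem asserts only existence of such functions.
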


\begin{proof}
Let $p(\vec{\mathbf{x}})$ be a multi-variable polynomial over the positive orthant: 

$$
p(\vec{\mathbf{x}}) 
= p(x_{1},..,x_{n} ) 
= \sum_{i \in \mathcal{I} } a_{i}  x_{1}^{i_{1}} x_{2}^{i_{2}} ... x_{n}^{i_{n}}
= \sum_{i \in \mathcal{I} } a_{i} \Pi_{j=1}^{n} x_{j}^{i_{j}} 
$$

Consider the set of terms that depend on at most one input variable, or single-variable terms in the expression for the polynomial $p(\vec{\mathbf{x}})$:

$$
\mathcal{P}_{1}
:=\{
a_{i} \Pi_{j=1}^{n} x_{j}^{i_{j}} |\; i \in \mathcal{I}, \; i_{j} \neq 0 \implies i_{k} = 0, \; \forall k \neq j
\}
$$

It is worth noting that $\mathcal{P}_{1}$ contains the constant function. 

Let $\mathcal{Q}$ denote the index set of all single-variable monomial terms:

$$
\mathcal{Q}
:=\{
\; i \; | \; i \in \mathcal{I}, \; i_{j} \neq 0 \implies i_{k} = 0, \; \forall k \neq j
\}
$$

The polynomial $p(\vec{\mathbf{x}})$ can be rewritten in terms of single-variable functions and a residual polynomial function $p_{res}$ as:

\begin{equation*}
\begin{split}
p(\vec{\mathbf{x}}) 
&=   \sum_{i \in \mathcal{Q} } a_{i} \Pi_{j=1}^{n} x_{j}^{i_{j}}  
+   \sum_{i \in \mathcal{I} \setminus \mathcal{Q} } a_{i} \Pi_{j=1}^{n} x_{j}^{i_{j}} \\
&= \sum_{j=1}^{n} f_{j}(x_{j}) + p_{res}(\vec{\mathbf{x}})
\end{split}
\end{equation*}

The single-variable terms can be consumed by a sum of $n$ arbitrary single-variable functions $f_{j}(x_{j})$. 

By \autoref{thm_exp_rep}, for any polynomial $p_{res}(\vec{\mathbf{x}})$ over the positive orthant, there exist continuous single-variable functions $g_{i,j}(x_{j})$ and $h_{i,j}(x_{j})$ such that:

\begin{equation*}
\begin{split}
p_{res}(\vec{\mathbf{x}}) 
= \sum_{i \in \mathcal{I} \setminus \mathcal{Q} } \exp( \Sigma_{j=1}^{n} g_{i,j}(x_{j}))    - \exp( \Sigma_{j=1}^{n} h_{i,j}(x_{j}))
\end{split}
\end{equation*}

Since the index set is countable, one can use another indexing scheme:

\begin{equation*}
\begin{split}
p_{res}(\vec{\mathbf{x}}) 
= \sum_{k = 1}^{\infty}  \exp( \Sigma_{j=1}^{n} g_{k,j}(x_{j})) - \exp( \Sigma_{j=1}^{n} h_{k,j}(x_{j})) 
\end{split}
\end{equation*}

Scale factors can be introduced without changing the representation:

\begin{equation*}
\begin{split}
p_{res}(\vec{\mathbf{x}}) 
&= \sum_{k = 1}^{\infty}  
\exp( \log{k^{2}} -\log{k^{2}} +  \Sigma_{j=1}^{n} g_{k,j}(x_{j})) 
- \exp( \log{k^{2}} -\log{k^{2}} + \Sigma_{j=1}^{n} h_{k,j}(x_{j})) \\
&= \sum_{k = 1}^{\infty}  
 \frac{1}{k^{2}} \exp( \log{k^{2}} +  \Sigma_{j=1}^{n} g_{k,j}(x_{j})) 
- \frac{1}{k^{2}} \exp( \log{k^{2}} + \Sigma_{j=1}^{n} h_{k,j}(x_{j})) \\
\end{split}
\end{equation*}

Since the single-variable functions $g_{i,j}(x_{j})$ and $h_{i,j}(x_{j})$ are arbitrary, one can absorb the constants and redefine $g_{i,j}(x_{j})$ and $h_{i,j}(x_{j})$ to obtain:

\begin{equation*}
\begin{split}
p_{res}(\vec{\mathbf{x}}) 
= \sum_{k = 1}^{\infty} \frac{1}{k^{2}} \exp( \Sigma_{j=1}^{n} g_{k,j}(x_{j})) 
                - \frac{1}{k^{2}} \exp( \Sigma_{j=1}^{n} h_{k,j}(x_{j})) 
\end{split}
\end{equation*}

Substituting the expressions one obtains:

\begin{equation*} 
\begin{split}
p(\vec{\mathbf{x}}) 
 = \sum_{j=1}^{n} f_{j}(x_{j}) + 
\sum_{k = 1}^{\infty} \frac{1}{k^{2}} \exp( \Sigma_{j=1}^{n} g_{k,j}(x_{j})) 
                - \frac{1}{k^{2}} \exp( \Sigma_{j=1}^{n} h_{k,j}(x_{j})) 
\end{split}
\end{equation*}

\end{proof}

There is a duality between representation and approximation. If any multi-variable polynomial can be exactly represented, then any continuous multi-variable function can be approximated to arbitrary accuracy. 

\subsubsection{Atlas definition}

\begin{definition}[Atlas]
\label{atlas_def}
Atlas is a function approximator of $n$ variables, with mixed-density B-spline functions $f_{j}(x_{j})$, $g_{i,j}(x_{j})$, and $h_{i,j}(x_{j})$ in the form:

\begin{equation*} 
\begin{split}
A(\vec{\mathbf{x}}) 
 := \sum_{j=1}^{n} f_{j}(x_{j}) + 
\sum_{k = 1}^{M} \frac{1}{k^{2}} \exp( \Sigma_{j=1}^{n} g_{k,j}(x_{j})) 
                - \frac{1}{k^{2}} \exp( \Sigma_{j=1}^{n} h_{k,j}(x_{j})) 
\end{split}
\end{equation*}

\end{definition}

Atlas is equivalently given by the compact notation:

\begin{equation*} 
\begin{split}
A(\vec{\mathbf{x}}) 
:= & \sum_{j=1}^{n} f_{j}(x_{j}) + 
\sum_{k = 1}^{M} \frac{1}{k^{2}} \exp( \Sigma_{j=1}^{n} g_{k,j}(x_{j}) ) 
                - \frac{1}{k^{2}} \exp( \Sigma_{j=1}^{n} h_{k,j}(x_{j})) \\
= & F(\vec{\mathbf{x}}) + 
\sum_{k = 1}^{M} \frac{1}{k^{2}} \exp(G_{k}(\vec{\mathbf{x}})) 
                - \frac{1}{k^{2}} \exp(H_{k}(\vec{\mathbf{x}})) \\
= & F(\vec{\mathbf{x}}) +  G(\vec{\mathbf{x}})  
                - H(\vec{\mathbf{x}}) \\
\end{split}
\end{equation*}

\subsubsection{Atlas polynomial approximation}
\begin{theorem}[Atlas polynomial approximation]
\label{thm_atlas_polynomial_approximation}
For any multi-variable polynomial $p(\vec{\mathbf{x}})$ over the positive orthant with bounded and compact domain $D(p)$ and $\varepsilon > 0$, there exists an Atlas model $A(\vec{\mathbf{x}})$ such that:

\begin{equation*}
\begin{split}
|p(\vec{\mathbf{x}}) - A(\vec{\mathbf{x}})| < \varepsilon
\end{split}
\end{equation*}

\end{theorem}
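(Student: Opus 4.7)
The plan is to combine the exact Atlas representation theorem (\autoref{thm_atlas_rep}) with a density argument for mixed-density B-spline functions, then propagate sup-norm approximation errors through the Atlas functional. The Atlas representation theorem produces continuous single-variable functions that exactly recover any polynomial using an infinite exponential series; the Atlas model definition truncates this series at $M$ and further restricts the single-variable functions to be mixed-density B-spline functions, so the task reduces to (i) truncating safely and (ii) approximating each continuous piece in sup norm.

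First I would invoke \autoref{thm_atlas_rep} to write $p(\vec{\mathbf{x}}) = \sum_{j=1}^n f_j(x_j) + \sum_{k=1}^\infty k^{-2}[\exp(G_k(\vec{\mathbf{x}})) - \exp(H_k(\vec{\mathbf{x}}))]$ with continuous single-variable $f_j, g_{k,j}, h_{k,j}$ on the compact coordinate projections of $D(p)$. Crucially, since $p$ is a polynomial, the index set $\mathcal{I}\setminus\mathcal{Q}$ in the proof of \autoref{thm_atlas_rep} is finite, so all but finitely many exponential pairs may be taken trivial by setting $g_{k,j}\equiv h_{k,j}$, in which case the pair cancels identically. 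Choosing $M$ at least equal to this finite count gives an exact representation using only $M$ exponential pairs and continuous single-variable functions.

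Next I would replace each continuous single-variable function by a mixed-density B-spline approximation. Because uniform cubic B-splines with increasing density $r$ form a family dense in $C([a,b],\mathbb{R})$ in the sup norm (a classical spline-approximation fact: refined cubic B-splines are quasi-interpolants with $O(\lambda^{-4})$ error for smooth targets and $o(1)$ error for merely continuous targets), for any $\eta>0$ I can pick $r$ large enough and choose coefficients (including the frozen lower-density ones, which are unrestricted at construction time) so that the resulting mixed-density B-spline function $\tilde{f}_j$, $\tilde{g}_{k,j}$, $\tilde{h}_{k,j}$ satisfies $\|f_j-\tilde{f}_j\|_\infty<\eta$ and analogously for the others, uniformly over the finite index sets $k\le M$ and $j\le n$. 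A linear rescaling of the input aligns each coordinate projection with the unit interval on which the basis is defined.

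Finally I would propagate the errors through the Atlas formula. The additive part contributes at most $n\eta$. For each exponential term, the continuous functions $G_k$ and $\tilde{G}_k$ are bounded on the compact domain, say by $B$ after a uniform bound, so the mean-value theorem for $\exp$ yields $\|\exp(G_k)-\exp(\tilde{G}_k)\|_\infty \leq e^B\,\|G_k-\tilde{G}_k\|_\infty \leq e^B\,n\eta$, and likewise for $H_k$. Weighting by $k^{-2}$ and summing $k\le M$ produces a total error bounded by $\bigl(n + 2n e^B \sum_{k=1}^M k^{-2}\bigr)\eta \leq \bigl(n + \tfrac{\pi^2}{3} n e^B\bigr)\eta$. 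Setting $\eta$ small enough forces the total below $\varepsilon$. The main obstacle is step two: establishing uniform density of mixed-density cubic B-splines in $C[a,b]$ rigorously, together with the technical point that enlarging the maximum density $r$ in the Atlas construction does not invalidate the frozen lower-density coefficients chosen during the approximation.
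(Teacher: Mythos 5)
Your proposal is correct and follows essentially the same route as the paper: invoke the Atlas representation theorem, truncate to finitely many exponential pairs (valid since the polynomial has finitely many residual monomials), approximate each single-variable function by mixed-density B-splines in sup norm, and propagate errors through the additive part and the $k^{-2}$-weighted exponential terms using $\sum_k k^{-2}=\pi^{2}/6$. The only cosmetic difference is that you control the exponential terms via a Lipschitz (mean-value) bound $e^{B}$ where the paper uses uniform continuity of $\exp$ on a bounded interval, and like the paper you lean on (and correctly flag) the unproven but standard fact that cubic B-splines of sufficient density are uniformly dense in $C[a,b]$.
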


\begin{proof}
Let $p(\vec{\mathbf{x}})$ be a multi-variable polynomial $p(\vec{\mathbf{x}})$ of $n$ variables over the positive orthant, and fix $\varepsilon > 0$, and choose: 

$$ \varepsilon = \varepsilon_{1} + \varepsilon_{2}$$

By \autoref{thm_atlas_rep}, there exist continuous single-variable functions $f_{j}(x_{j})$, $g_{i,j}(x_{j})$, and $h_{i,j}(x_{j})$ such that:

\begin{equation*} 
\begin{split}
p(\vec{\mathbf{x}}) 
 = \sum_{j=1}^{n} f_{j}(x_{j}) + 
\sum_{k = 1}^{\infty} \frac{1}{k^{2}} \exp( \Sigma_{j=1}^{n} g_{k,j}(x_{j})) 
                - \frac{1}{k^{2}} \exp( \Sigma_{j=1}^{n} h_{k,j}(x_{j})) 
\end{split}
\end{equation*}

If  $p(\vec{\mathbf{x}})$ has finitely many terms, then let $M$ denote the number of residual terms:

\begin{equation*} 
\begin{split}
p(\vec{\mathbf{x}}) 
= & \sum_{j=1}^{n} f_{j}(x_{j}) + 
\sum_{k = 1}^{M} \frac{1}{k^{2}} \exp( \Sigma_{j=1}^{n} g_{k,j}(x_{j}) ) 
                - \frac{1}{k^{2}} \exp( \Sigma_{j=1}^{n} h_{k,j}(x_{j})) \\
= & F(\vec{\mathbf{x}}) + 
\sum_{k = 1}^{M} \frac{1}{k^{2}} \exp(G_{k}(\vec{\mathbf{x}})) 
                - \frac{1}{k^{2}} \exp(H_{k}(\vec{\mathbf{x}})) \\
= & F(\vec{\mathbf{x}}) +  G(\vec{\mathbf{x}})  
                - H(\vec{\mathbf{x}}) \\
\end{split}
\end{equation*}

Choose mixed-density B-spline functions $f_{j}^{*}(x_{j})$, $g_{i,j}^{*}(x_{j})$, and $h_{i,j}^{*}(x_{j})$ such that the Atlas model $A(\vec{\mathbf{x}})$ is given by:

\begin{equation*} 
\begin{split}
A^{*}(\vec{\mathbf{x}}) 
= & \sum_{j=1}^{n} f_{j}^{*}(x_{j}) + 
\sum_{k = 1}^{M} \frac{1}{k^{2}} \exp( \Sigma_{j=1}^{n} g_{k,j}^{*}(x_{j})) 
                - \frac{1}{k^{2}} \exp( \Sigma_{j=1}^{n} h_{k,j}^{*}(x_{j})) \\
= & F^{*}(\vec{\mathbf{x}}) + 
\sum_{k = 1}^{M} \frac{1}{k^{2}} \exp(G_{k}^{*}(\vec{\mathbf{x}})) 
                - \frac{1}{k^{2}} \exp(H_{k}^{*}(\vec{\mathbf{x}})) \\   
= & F^{*}(\vec{\mathbf{x}}) +  G^{*}(\vec{\mathbf{x}})  
                - H^{*}(\vec{\mathbf{x}}) \\ 
\end{split}
\end{equation*}

Then it follows that,

\begin{equation*}
\begin{split}
\abs{p(\vec{\mathbf{x}}) - A(\vec{\mathbf{x}})} 
= & \abs{ F(\vec{\mathbf{x}}) +  G(\vec{\mathbf{x}})  
                - H(\vec{\mathbf{x}}) 
- \left( F^{*}(\vec{\mathbf{x}}) +  G^{*}(\vec{\mathbf{x}})  
                - H^{*}(\vec{\mathbf{x}}) \right) } \\
= & \abs{ F(\vec{\mathbf{x}}) - F^{*}(\vec{\mathbf{x}}) 
+  G(\vec{\mathbf{x}})  - G^{*}(\vec{\mathbf{x}})
- \left( H(\vec{\mathbf{x}}) - H^{*}(\vec{\mathbf{x}})\right)    
                  } \\
\leq & \abs{ F(\vec{\mathbf{x}}) - F^{*}(\vec{\mathbf{x}})}
+  \abs{G(\vec{\mathbf{x}})  - G^{*}(\vec{\mathbf{x}})}
+  \abs{ H(\vec{\mathbf{x}}) - H^{*}(\vec{\mathbf{x}}) }   \\
\end{split}
\end{equation*}

The first set of functions is easily shown to have bounded error. Choose mixed-density B-spline functions $f_{j}^{*}(x_{j})$ such that:

\begin{equation*}
\begin{split}
| f_{j}(x_{j}) - f_{j}^{*}(x_{j}) | 
 & < \frac{\varepsilon_{1}}{n} \\
\end{split}
\end{equation*}

Then it follows,

\begin{equation*}
\begin{split}
\abs{ F(\vec{\mathbf{x}}) - F^{*}(\vec{\mathbf{x}})} 
= & \abs{ \sum_{j=1}^{n} f_{j}(x_{j}) - \sum_{j=1}^{n} f_{j}^{*}(x_{j}) } \\
\leq & \sum_{j=1}^{n} \abs{ f_{j}(x_{j}) - f_{j}^{*}(x_{j}) } \\
< & 
\varepsilon_{1}   \\
\end{split}
\end{equation*}

The interior functions for the exponential functions are more complicated. 

\begin{remark}
The uniform continuity of exponentials on bouned domains makes it possible to bound the approximation error in each exponential term. The exponential function is uniformly continuous on a compact and bounded subset of the real numbers $\left[a,b \right]$. Thus, for any $\varepsilon_{\exp} > 0$, there exists a $\delta_{\exp} > 0$, such that for every  $x,y \in \left[a,b \right]$:

\begin{equation*}
\begin{split}
 |x-y| 
 & < \delta_{\exp} \implies |\exp(x)-\exp(y)| < \varepsilon_{\exp} \\
\end{split}
\end{equation*}
\end{remark}

For all exponential functions on bounded and compact domains choose:

$$ \varepsilon_{\exp} = \frac{3 \varepsilon_{2}}{\pi^2} $$

Choose the smallest $\delta_{\exp}$ for all $M$ exponential functions, so that the implication holds. Choose $ \delta_{g,k,j} $, such that:

$$ \sum_{j=1}^{n} \delta_{g,k,j} < \delta_{\exp} $$ 

Choose mixed-density B-spline functions $g_{i,j}^{*}(x_{j})$, and $h_{i,j}^{*}(x_{j})$ such that:

\begin{equation*}
\begin{split}
 | g_{k,j}(x_{j}) - g_{k,j}^{*}(x_{j}) | 
 & < \delta_{g,k,j} \\
| h_{k,j}(x_{j}) - h_{k,j}^{*}(x_{j}) | 
 & <  \delta_{g,k,j} \\
\end{split}
\end{equation*}

The interior functions $g_{k,j}^{*}(x_{j})$ have bounded approximation error $\delta_{g,k,j}$ one obtains:


\begin{equation*} 
\begin{split}
    &      \abs{  \sum_{j=1}^{n} g_{k,j}(x_{j}) 
 -      \sum_{j=1}^{n} g_{k,j}^{*}(x_{j}) } \\
\leq&  \sum_{j=1}^{n} \abs{ g_{k,j}(x_{j}) 
 -    g_{k,j}^{*}(x_{j}) } \\
<   &  \sum_{j=1}^{n} \delta_{g,k,j} < \delta_{\exp} \\
\end{split}
\end{equation*}

This implies that:

\begin{equation*} 
\begin{split}
 \abs{ \exp( \Sigma_{j=1}^{n} g_{k,j}(x_{j})) 
 -  \exp( \Sigma_{j=1}^{n} g_{k,j}^{*}(x_{j})) } < & \varepsilon_{\exp}  
 \\
\end{split}
\end{equation*}

Recombining this result with the exponential terms yields:

\begin{equation*} 
\begin{split}
 \abs{G(\vec{\mathbf{x}})  - G^{*}(\vec{\mathbf{x}})} 
= & \abs{ \sum_{k = 1}^{M} \frac{1}{k^{2}} \exp( \Sigma_{j=1}^{n} g_{k,j}(x_{j})) - \sum_{k = 1}^{M} \frac{1}{k^{2}} \exp( \Sigma_{j=1}^{n} g_{k,j}^{*}(x_{j})) }\\
 \abs{G(\vec{\mathbf{x}})  - G^{*}(\vec{\mathbf{x}})} 
\leq &  \sum_{k = 1}^{M} \frac{1}{k^{2}} \abs{ \exp( \Sigma_{j=1}^{n} g_{k,j}(x_{j})) -  \exp( \Sigma_{j=1}^{n} g_{k,j}^{*}(x_{j})) } \\
 \abs{G(\vec{\mathbf{x}})  - G^{*}(\vec{\mathbf{x}})} 
<&  \sum_{k = 1}^{M} \frac{1}{k^{2}} \varepsilon_{\exp} \\
\end{split}
\end{equation*}

The scaling factors of $k^{-2}$ were chosen for convergence, such that:

\begin{equation*} 
\begin{split}
 \abs{G(\vec{\mathbf{x}})  - G^{*}(\vec{\mathbf{x}})} 
<&  \sum_{k = 1}^{\infty} \frac{1}{k^{2}} \varepsilon_{\exp} \\
 \abs{G(\vec{\mathbf{x}})  - G^{*}(\vec{\mathbf{x}})} 
<&  \varepsilon_{\exp} \sum_{k = 1}^{\infty} \frac{1}{k^{2}}  \\
 \abs{G(\vec{\mathbf{x}})  - G^{*}(\vec{\mathbf{x}})} 
<&  \varepsilon_{\exp} \frac{\pi^{2}}{6} = \frac{3 \varepsilon_{2}}{\pi^2} \frac{\pi^{2}}{6} = \frac{\varepsilon_{2}}{2} \\
\end{split}
\end{equation*}

It follows that:

\begin{equation*} 
\begin{split}
 \abs{G(\vec{\mathbf{x}})  - G^{*}(\vec{\mathbf{x}})} 
<& \frac{\varepsilon_{2}}{2}
\end{split}
\end{equation*}

The same argument holds for $\abs{ H(\vec{\mathbf{x}}) - H^{*}(\vec{\mathbf{x}}) }$, and one obtains the result:

\begin{equation*}
\begin{split}
\abs{p(\vec{\mathbf{x}}) - A(\vec{\mathbf{x}})} 
\leq & \abs{ F(\vec{\mathbf{x}}) - F^{*}(\vec{\mathbf{x}})}
+  \abs{G(\vec{\mathbf{x}})  - G^{*}(\vec{\mathbf{x}})}
+  \abs{ H(\vec{\mathbf{x}}) - H^{*}(\vec{\mathbf{x}}) }   \\
\abs{p(\vec{\mathbf{x}}) - A(\vec{\mathbf{x}})} 
< & \varepsilon_{1}
+  \frac{\varepsilon_{2}}{2}
+  \frac{\varepsilon_{2}}{2}   \\
\abs{p(\vec{\mathbf{x}}) - A(\vec{\mathbf{x}})} 
< & \varepsilon_{1}
+  \varepsilon_{2}   \\
\end{split}
\end{equation*}

Finally, 

\begin{equation*}
\begin{split}
\abs{p(\vec{\mathbf{x}}) - A(\vec{\mathbf{x}})} 
< & \varepsilon   \\
\end{split}
\end{equation*}

\end{proof}

\subsubsection{Universal function approximation theorem}

\begin{theorem}[Atlas universal function approximation]

For any $\varepsilon>0$, and continuous multi-variable function $y(\vec{\mathbf{x}})$ of $n$ variables over a compact domain in the positive orthant, there exists an Atlas model $A(\vec{\mathbf{x}})$ such thtat:

\begin{equation*}
\begin{split}
| y(\vec{\mathbf{x}}) - A(\vec{\mathbf{x}}) | < \varepsilon
\end{split}
\end{equation*}

\end{theorem}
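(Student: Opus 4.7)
The plan is a short triangle-inequality argument that combines the classical Stone--Weierstrass theorem with the Atlas polynomial approximation theorem (\autoref{thm_atlas_polynomial_approximation}), which has already done the heavy lifting. Since the compact domain of $y(\vec{\mathbf{x}})$ sits inside the positive orthant, no additional work is needed to align the hypotheses of the two building blocks.

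Fix $\varepsilon > 0$ and split it as $\varepsilon = \varepsilon_1 + \varepsilon_2$ with both parts positive (for example $\varepsilon_1 = \varepsilon_2 = \varepsilon/2$). The first step is to invoke the Stone--Weierstrass theorem on the compact domain of $y$ to obtain a multi-variable polynomial $p(\vec{\mathbf{x}})$ such that $|y(\vec{\mathbf{x}}) - p(\vec{\mathbf{x}})| < \varepsilon_1$ uniformly on that domain. The second step is to apply \autoref{thm_atlas_polynomial_approximation} to this polynomial $p(\vec{\mathbf{x}})$ with tolerance $\varepsilon_2$, which produces an Atlas model $A(\vec{\mathbf{x}})$ with $|p(\vec{\mathbf{x}}) - A(\vec{\mathbf{x}})| < \varepsilon_2$ on the same domain.

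The third and final step is the triangle inequality:
\begin{equation*}
|y(\vec{\mathbf{x}}) - A(\vec{\mathbf{x}})|
\leq |y(\vec{\mathbf{x}}) - p(\vec{\mathbf{x}})| + |p(\vec{\mathbf{x}}) - A(\vec{\mathbf{x}})|
< \varepsilon_1 + \varepsilon_2 = \varepsilon,
\end{equation*}
which gives the claimed uniform bound.

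There is essentially no obstacle left at this stage: the nontrivial content of the result lives in \autoref{thm_atlas_polynomial_approximation}, where uniform continuity of $\exp$ on bounded intervals and absolute convergence of $\sum k^{-2}$ were used to control the error inside and across the exponential terms of the Atlas architecture. Once that theorem is in hand, the universal approximation statement follows as a direct corollary, so the only care required is bookkeeping the two error budgets $\varepsilon_1$ and $\varepsilon_2$ and noting that the polynomial $p$ furnished by Stone--Weierstrass is automatically a polynomial over the positive orthant, matching the hypothesis of the Atlas polynomial approximation theorem.
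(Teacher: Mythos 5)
Your proposal is correct and follows essentially the same route as the paper: Stone--Weierstrass gives a polynomial within $\varepsilon_{1}$ of $y$, the Atlas polynomial approximation theorem gives an Atlas model within $\varepsilon_{2}$ of that polynomial, and the triangle inequality combines the two error budgets. No gaps.
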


\begin{proof}
Let $\varepsilon>0$, and $y(\vec{\mathbf{x}})$ be a continuous multi-variable function  of $n$ variables over a compact domain in the positive orthant. Choose $\varepsilon_{1}+\varepsilon_{2} = \varepsilon$.

By the Stone–Weierstrass theorem there exists a multi-variable polynomial $p(\vec{\mathbf{x}})$ over the domain of $y(\vec{\mathbf{x}})$ such that: 

\begin{equation*}
\begin{split}
|y(\vec{\mathbf{x}}) - p(\vec{\mathbf{x}}) | < \varepsilon_{1}
\end{split}
\end{equation*}

By \autoref{thm_atlas_polynomial_approximation} an Atlas model can approximate the polynomial $p(\vec{\mathbf{x}})$ to arbitrary precision:

\begin{equation*}
\begin{split}
|p(\vec{\mathbf{x}}) - A(\vec{\mathbf{x}})| < \varepsilon_{2}
\end{split}
\end{equation*}

It follows from the triangle inequality that:

\begin{equation*}
\begin{split}
| y(\vec{\mathbf{x}}) - A(\vec{\mathbf{x}}) | 
 & \leq 
 | y(\vec{\mathbf{x}}) - p(\vec{\mathbf{x}}) | + | p(\vec{\mathbf{x}}) - A(\vec{\mathbf{x}}) | \\
| y(\vec{\mathbf{x}}) - A(\vec{\mathbf{x}}) |
 & < \varepsilon_{1}+\varepsilon_{2} \\
\end{split}
\end{equation*}

Finally,

\begin{equation*}
\begin{split}
 | y(\vec{\mathbf{x}}) - A(\vec{\mathbf{x}}) |
& < \varepsilon
\end{split}
\end{equation*}

\end{proof}

\subsection{Atlas properties}

\subsubsection{Atlas expansion}

The number of exponential terms can be increased without changing the output of the model. We can choose to initialise $G_{M+1}(\vec{\mathbf{x}}) = 0$ and $H_{M+1}(\vec{\mathbf{x}}) = 0$, such that the model capacity can be increased without changing the output of the model:

\begin{equation*}
    \begin{aligned}
A_{M+1}(\vec{\mathbf{x}}) 
&= \sum_{k = 1}^{M+1} \frac{1}{k^{2}} \exp(G_{k}(\vec{\mathbf{x}})) 
                - \frac{1}{k^{2}} \exp(H_{k}(\vec{\mathbf{x}})) \\
&   = \frac{1}{(M+1)^{2}} \exp(G_{M+1}(\vec{\mathbf{x}})) 
    - \frac{1}{(M+1)^{2}} \exp(H_{M+1}(\vec{\mathbf{x}}))
    + A(\vec{\mathbf{x}})  \\
&   = \frac{1}{(M+1)^{2}} \exp(0) 
    - \frac{1}{(M+1)^{2}} \exp(0)
    + A(\vec{\mathbf{x}})  \\
&   =  A(\vec{\mathbf{x}})  
    \end{aligned}
\end{equation*}

The density of basis functions in Atlas can also be incremented without changing the learned output of the model. The density of basis functions can also be increased without changing the output for any of mixed-density B-spline functions $\Psi$ of the form:
\begin{equation*}
    \begin{aligned}
\Psi(x) 
&= \sum_{\rho=0}^{r} \sum_{i=1}^{ 2^{\rho+2}} \theta_{\rho,i} S_{\rho,i}(x) 
    \end{aligned}
\end{equation*}

Analytically, we can choose all the new parameters $\theta_{r+1,i} = 0, \; \forall i \in \bf{N}$ such that:

\begin{equation*}
    \begin{aligned}
            \Psi(x) 
&= \sum_{\rho=0}^{r} \sum_{i=1}^{ 2^{\rho+2}} \theta_{\rho,i} S_{\rho,i}(x)
= \sum_{\rho=0}^{r+1} \sum_{i=1}^{ 2^{\rho+2}} \theta_{\rho,i} S_{\rho,i}(x) 
    \end{aligned}
\end{equation*}

The last thing to note is that only the parameters for the largest specified density are trainable, in contrast to smaller density parameters that are fixed constants.


\subsubsection{Atlas sparsity}
\begin{property}[Sparsity]
For any $\vec{\mathbf{x}} \in D(A) \subset R^{n}$ and bounded trainable parameters $\theta_{i}$ with index set $\Theta$, the gradient vector of trainable parameters for Atlas is sparse:

$$ 
\norm{ \grad_{\vec{\mathbf{\theta}}} A(\vec{\mathbf{x}} )}_{0} 
= \sum_{i \in \Theta}  d_{Hamming} \left(\frac{\partial A}{\partial \theta_{i}} (\vec{\mathbf{x}}),0  \right)
\leq 4 n (2M+1)
$$
\end{property} 

\begin{proof} Let $A( \vec{\mathbf{x}} )$ denote some Atlas model, with mixed-density B-spline functions $f_{j}(x_{j})$, $g_{i,j}(x_{j})$, and $h_{i,j}(x_{j})$ in the form:

\begin{equation*} 
\begin{split}
A(\vec{\mathbf{x}}) 
 = \sum_{j=1}^{n} f_{j}(x_{j}) + 
\sum_{k = 1}^{M} \frac{1}{k^{2}} \exp( \Sigma_{j=1}^{n} g_{k,j}(x_{j})) 
                - \frac{1}{k^{2}} \exp( \Sigma_{j=1}^{n} h_{k,j}(x_{j})) 
\end{split}
\end{equation*}

Each mixed-density B-spline function has its own parameters that are independent of every other mixed-density B-spline. The mixed-density B-splines function $\Psi(x)$ is by definition given by:

\begin{equation*}
    \begin{aligned}
\Psi(x) 
&= \sum_{\rho=0}^{r} \sum_{i=1}^{ 2^{\rho+2}} \theta_{\rho,i} S_{\rho,i}(x) 
    \end{aligned}
\end{equation*}

Thus for every mixed-density B-spline function in $A( \vec{\mathbf{x}} )$:

\begin{equation*}
    \begin{aligned}
f_{j}(x_{j}) 
&= \sum_{\rho=0}^{r} \sum_{i=1}^{ 2^{\rho+2}} \theta_{f,(\rho,i,j)} S_{\rho,i}(x_{j}) \\
g_{k,j}(x_{j})) 
&= \sum_{\rho=0}^{r} \sum_{i=1}^{ 2^{\rho+2}} \theta_{g,(\rho,i,k,j)} S_{\rho,i}(x) \\
h_{k,j}(x_{j})) 
&= \sum_{\rho=0}^{r} \sum_{i=1}^{ 2^{\rho+2}} \theta_{h,(\rho,i,k,j)} S_{\rho,i}(x) \\
    \end{aligned}
\end{equation*}

Only the maximum density basis function $\rho=r$ have trainable parameters. The maximum density $r$-density B-spline functions are uniform B-spline functions with trainable parameters. There are at most four basis functions that are non-zero for any given $x_{j}$, and as such the gradient vector with respect to trainable parameters will have at most four non-zero entries for each $r$-density B-spline function, the same four parameters for each mixed-density B-spline functions $f_{j}(x_{j})$, $g_{i,j}(x_{j})$, and $h_{i,j}(x_{j})$. One simply needs to count the number of mixed-density B-spline functions. 

The number of mixed-density B-spline functions labeled $f_{j}(x_{j})$ are in total $n$, with $4$ active trainable parameters each. 

The number of functions labeled $g_{k,j}(x_{j})$ are in total $nM$. For each $M$, there are $n$ mixed-density B-spline functions, with $4$ active trainable parameters each. 

The number of mixed-density B-spline functions labeled $h_{k,j}(x_{j})$ are in total $nM$. For each $M$, there are $n$ mixed-density B-spline functions, with $4$ active trainable parameters each. 

The total number of active trainable parameters is thus:

\begin{equation*}
    \begin{aligned}
4n + 4nM + 4nM 
&= 4 n (2M+1) \\
    \end{aligned}
\end{equation*}

\end{proof}

\begin{remark} The total number of trainable parameters for each mixed-density B-spline function is $2^{r+2}$. For a fixed number of variables $n$, the model has a total of $2^{r+2}n(2M+1)$ trainable parameters. The gradient vector has a maximum of $4 n (2M+1)$ non-zero entries, which is independent of $r$. Recall that only the maximum density ($\rho=r$) cubic B-spline function has trainable parameters. The fraction of trainable basis functions that are active is at most $2^{-r}$. Sparsity entails efficient implementation, and suggests possible memory retention and robustness to catastrophic forgetting.
\end{remark}

It is worth noting that the total number of parameters (including constants) is:

\begin{equation*}
    \begin{aligned}
\text{Total number of parameters} \propto \sum_{\rho = 0}^{r} 2^{\rho+2}n(2M+1) \approx 2^{r+1}n(2M+1) \\
    \end{aligned}
\end{equation*}

\subsubsection{Atlas gradient flow attenuation}
\begin{property}[Gradient flow attenuation]
For any $\vec{\mathbf{x}} \in D(A) \subset R^{n}$ and bounded trainable parameters $\theta_{i}$ with index set $\Theta$: if all the mixed-density B-spline functions are bounded, then the gradient vector of trainable parameters for Atlas is bounded:

$$ 
\norm{ \grad_{\vec{\mathbf{\theta}}} A( \vec{\mathbf{x}} )}_{1} 
= \sum_{i \in \Theta}  \left| \frac{\partial A}{\partial \theta_{i}} ( \vec{\mathbf{x}} ) \right| 
< U
$$
\end{property}

\begin{proof} Let $A( \vec{\mathbf{x}} )$ denote some Atlas model, with mixed-density B-spline functions $f_{j}(x_{j})$, $g_{i,j}(x_{j})$, and $h_{i,j}(x_{j})$ in the form:

\begin{equation*} 
\begin{split}
A(\vec{\mathbf{x}}) 
= & \sum_{j=1}^{n} f_{j}(x_{j}) + 
\sum_{k = 1}^{M} \frac{1}{k^{2}} \exp( \Sigma_{j=1}^{n} g_{k,j}(x_{j})) 
                - \frac{1}{k^{2}} \exp( \Sigma_{j=1}^{n} h_{k,j}(x_{j})) \\
= & F(\vec{\mathbf{x}}) + 
\sum_{k = 1}^{M} \frac{1}{k^{2}} \exp(G_{k}(\vec{\mathbf{x}})) 
                - \frac{1}{k^{2}} \exp(H_{k}(\vec{\mathbf{x}})) \\   
= & F(\vec{\mathbf{x}}) +  G(\vec{\mathbf{x}})  
                - H(\vec{\mathbf{x}}) \\ 
\end{split}
\end{equation*}

With each mixed-density B-spline function in $A( \vec{\mathbf{x}} )$ given by:

\begin{equation*}
    \begin{aligned}
f_{j}(x_{j}) 
&= \sum_{\rho=0}^{r} \sum_{i=1}^{ 2^{\rho+2}} \theta_{f,(\rho,i,j)} S_{\rho,i}(x_{j}) \\
g_{k,j}(x_{j})) 
&= \sum_{\rho=0}^{r} \sum_{i=1}^{ 2^{\rho+2}} \theta_{g,(\rho,i,k,j)} S_{\rho,i}(x) \\
h_{k,j}(x_{j})) 
&= \sum_{\rho=0}^{r} \sum_{i=1}^{ 2^{\rho+2}} \theta_{h,(\rho,i,k,j)} S_{\rho,i}(x) \\
    \end{aligned}
\end{equation*}

The norm of the gradient of $A( \vec{\mathbf{x}} )$ with respect to trainable parameters is given by:

\begin{equation*} 
\begin{split}
\norm{ \grad_{\vec{\mathbf{\theta}}} A( \vec{\mathbf{x}} )}_{1}   
= & 
\norm{ 
\grad_{\vec{\mathbf{\theta}}} 
\left(
    F(\vec{\mathbf{x}}) 
+   G(\vec{\mathbf{x}})  
-   H(\vec{\mathbf{x}}) 
\right) }_{1}  \\
\norm{ \grad_{\vec{\mathbf{\theta}}} A( \vec{\mathbf{x}} )}_{1}  
= & 
\norm{ 
    \grad_{\vec{\mathbf{\theta}}} F(\vec{\mathbf{x}}) 
+   \grad_{\vec{\mathbf{\theta}}} G(\vec{\mathbf{x}})  
-   \grad_{\vec{\mathbf{\theta}}} H(\vec{\mathbf{x}}) 
 }_{1}  \\
\norm{ \grad_{\vec{\mathbf{\theta}}} A( \vec{\mathbf{x}} )}_{1}  
\leq & 
\norm{\grad_{\vec{\mathbf{\theta}}} F(\vec{\mathbf{x}})}_{1}
+
\norm{\grad_{\vec{\mathbf{\theta}}} G(\vec{\mathbf{x}})}_{1}
+
\norm{\grad_{\vec{\mathbf{\theta}}} H(\vec{\mathbf{x}})}_{1}  \\
\end{split}
\end{equation*}

The first term is bounded,

\begin{equation*} 
\begin{split}
\norm{\grad_{\vec{\mathbf{\theta}}} F(\vec{\mathbf{x}})}_{1} 
= & 
\norm{\grad_{\vec{\mathbf{\theta}}} 
\left( 
\sum_{j=1}^{n} f_{j}(x_{j}) 
\right) 
}_{1} 
\\
= & 
\norm{ 
\sum_{j=1}^{n} \grad_{\vec{\mathbf{\theta}}} f_{j}(x_{j}) 
}_{1} 
\\
\leq & 
\sum_{j=1}^{n} 
\norm{
\grad_{\vec{\mathbf{\theta}}} f_{j}(x_{j}) 
}_{1} 
\\
\end{split}
\end{equation*}

Substituting the expression for each $f_{j}(x_{j})$, all lower densities have constant parameters:

\begin{equation*} 
\begin{split}
\norm{\grad_{\vec{\mathbf{\theta}}} F(\vec{\mathbf{x}})}_{1} 
\leq & 
\sum_{j=1}^{n} 
\norm{
\grad_{\vec{\mathbf{\theta}}} 
\left(
\sum_{\rho=0}^{r} \sum_{i=1}^{ 2^{\rho+2}} \theta_{f,(\rho,i,j)} S_{\rho,i}(x_{j})
\right)
}_{1} 
\\
\norm{\grad_{\vec{\mathbf{\theta}}} F(\vec{\mathbf{x}})}_{1} 
\leq & 
\sum_{j=1}^{n} 
\norm{
\grad_{\vec{\mathbf{\theta}}} 
\left(
\sum_{i=1}^{ 2^{r+2}} \theta_{f,(r,i,j)} S_{r,i}(x_{j})
\right)
}_{1} 
\\
\norm{\grad_{\vec{\mathbf{\theta}}} F(\vec{\mathbf{x}})}_{1} 
\leq & 
\sum_{j=1}^{n} 
\sum_{i=1}^{ 2^{r+2}}
\norm{
\grad_{\vec{\mathbf{\theta}}} 
\left(
 \theta_{f,(r,i,j)} S_{r,i}(x_{j})
\right)
}_{1} 
\\
\norm{\grad_{\vec{\mathbf{\theta}}} F(\vec{\mathbf{x}})}_{1} 
\leq & 
\sum_{j=1}^{n} 
\sum_{i=1}^{ 2^{r+2}}
\abs{
S_{r,i}(x_{j})
}
\\
\end{split}
\end{equation*}

Each basis function is continuous and bounded by some positive constant $u>0$, such that $S(x)<u$ regardless of its density, and it follows that:

\begin{equation*} 
\begin{split}
\norm{\grad_{\vec{\mathbf{\theta}}} F(\vec{\mathbf{x}})}_{1} 
\leq & 
\sum_{j=1}^{n} 
\sum_{i=1}^{ 2^{r+2}} u
\\
\end{split}
\end{equation*}

The last thing to include is that at most four basis functions are non-zero, regardless of the value of $r$, so a tighter upper bound is:

\begin{equation*} 
\begin{split}
\norm{\grad_{\vec{\mathbf{\theta}}} F(\vec{\mathbf{x}})}_{1} 
\leq & 
\sum_{j=1}^{n} 
\sum_{i=1}^{4} u = 4nu
\\
\end{split}
\end{equation*}

\begin{remark}
Each $\rho$-density B-spline function has at most four active basis functions, and each mixed-density B-spline function has $r+1$ different $\rho$-density B-spline functions. If the lower densities $\rho<r$ were also trainable, then this upper bound would instead be $4nu(r+1)$. This is why only the maximum density was chosen to be trainable.
\end{remark}

The exponential terms are more complicated. 

\begin{equation*} 
\begin{split}
\norm{\grad_{\vec{\mathbf{\theta}}} G(\vec{\mathbf{x}})}_{1} 
= & 
\norm{\grad_{\vec{\mathbf{\theta}}} 
\left( 
\sum_{k = 1}^{M} \frac{1}{k^{2}} \exp(G_{k}(\vec{\mathbf{x}})) 
\right) 
}_{1} 
\\
\norm{\grad_{\vec{\mathbf{\theta}}} G(\vec{\mathbf{x}})}_{1} 
\leq & 
\sum_{k = 1}^{M} \frac{1}{k^{2}}
\norm{\grad_{\vec{\mathbf{\theta}}} 
\left( 
 \exp(G_{k}(\vec{\mathbf{x}})) 
\right) 
}_{1} 
\\
\norm{\grad_{\vec{\mathbf{\theta}}} G(\vec{\mathbf{x}})}_{1} 
\leq & 
\sum_{k = 1}^{M} \frac{1}{k^{2}}
\norm{
 \exp(G_{k}(\vec{\mathbf{x}}))
 \grad_{\vec{\mathbf{\theta}}} 
\left( 
G_{k}(\vec{\mathbf{x}}) 
\right) 
}_{1} 
\\
\norm{\grad_{\vec{\mathbf{\theta}}} G(\vec{\mathbf{x}})}_{1} 
\leq & 
\sum_{k = 1}^{M} \frac{1}{k^{2}}
\exp(G_{k}(\vec{\mathbf{x}}))
\norm{
 \grad_{\vec{\mathbf{\theta}}} 
\left( 
G_{k}(\vec{\mathbf{x}}) 
\right) 
}_{1} 
\\
\end{split}
\end{equation*}

Each mixed-density B-spline function is bounded, so

\begin{equation*}
    \begin{aligned}
\abs{g_{k,j}(x_{j})) }
&= \abs{\sum_{\rho=0}^{r} \sum_{i=1}^{ 2^{\rho+2}} \theta_{g,(\rho,i,k,j)} S_{\rho,i}(x)} < u_{g,(k,j)}\\
    \end{aligned}
\end{equation*}

Since $n$ is fixed and finite, the functions $G_{k}(\vec{\mathbf{x}})$ are bounded:

\begin{equation*}
    \begin{aligned}
\abs{G_{k}(\vec{\mathbf{x}})} 
&=  \abs{\sum_{j=1}^{n} g_{k,j}(x_{j})} 
\leq \sum_{j=1}^{n} \abs{g_{k,j}(x_{j})}
< \sum_{j=1}^{n} u_{g,(k,j)} = u_{g,(k)}
\\
    \end{aligned}
\end{equation*}

Since this is true for each $G_{k}$, one can choose the maximum bound:

\begin{equation*}
    \begin{aligned}
u_{g}
&= \max_{k=1, \dots , M }
\{ u_{g,(k)} \} 
\\
    \end{aligned}
\end{equation*}

It is evident that:

\begin{equation*}
G_{k}(\vec{\mathbf{x}})  
\leq \abs{G_{k}(\vec{\mathbf{x}})} 
< u_{g}
\end{equation*}

Since the exponential function is monotonic increasing:

\begin{equation*}
\exp(G_{k}(\vec{\mathbf{x}}) ) 
\leq \exp(\abs{G_{k}(\vec{\mathbf{x}})} )
< \exp(u_{g})
\end{equation*}

This result can be substituted back,

\begin{equation*} 
\begin{split}
\norm{\grad_{\vec{\mathbf{\theta}}} G(\vec{\mathbf{x}})}_{1} 
< & 
\sum_{k = 1}^{M} \frac{1}{k^{2}}
\exp(u_{g})
\norm{
 \grad_{\vec{\mathbf{\theta}}} 
\left( 
G_{k}(\vec{\mathbf{x}}) 
\right) 
}_{1} 
\\
\end{split}
\end{equation*}

It should be noted that $G_{k}(\vec{\mathbf{x}})$ and $F(\vec{\mathbf{x}})$ have similar structure such that:

\begin{equation*} 
\norm{
 \grad_{\vec{\mathbf{\theta}}} 
\left( 
G_{k}(\vec{\mathbf{x}}) 
\right) 
}_{1}  
= \norm{
 \grad_{\vec{\mathbf{\theta}}} 
\left( 
F(\vec{\mathbf{x}}) 
\right) 
}_{1}  
\end{equation*}

This is true, even though $G_{k}(\vec{\mathbf{x}}) \neq F(\vec{\mathbf{x}})$, because the same set of basis functions are used, with different coefficient parameters being the only difference. The consequence is that:

\begin{equation*} 
\begin{split}
\norm{\grad_{\vec{\mathbf{\theta}}} G(\vec{\mathbf{x}})}_{1} 
< & 
\sum_{k = 1}^{M} \frac{1}{k^{2}}
\exp(u_{g})
\norm{
 \grad_{\vec{\mathbf{\theta}}} 
\left( 
F(\vec{\mathbf{x}}) 
\right) 
}_{1} 
\\
\end{split}
\end{equation*}

Substituting previously shown results gives:

\begin{equation*} 
\begin{split}
\norm{\grad_{\vec{\mathbf{\theta}}} G(\vec{\mathbf{x}})}_{1} 
< & 
\sum_{k = 1}^{M} \frac{1}{k^{2}}
\exp(u_{g})
4nu
\\
\norm{\grad_{\vec{\mathbf{\theta}}} G(\vec{\mathbf{x}})}_{1} 
< & 
4nu \exp(u_{g}) 
\sum_{k = 1}^{\infty} \frac{1}{k^{2}}
\\
\norm{\grad_{\vec{\mathbf{\theta}}} G(\vec{\mathbf{x}})}_{1} 
< & 
4nu \exp(u_{g}) 
\frac{\pi^{2}}{6}
< 4nu \pi^{2} \exp(u_{g}) 
\\
\end{split}
\end{equation*}

The same argument can be used to find an upper bound for $\norm{\grad_{\vec{\mathbf{\theta}}} H(\vec{\mathbf{x}})}_{1}$

\begin{equation*} 
\begin{split}
\norm{\grad_{\vec{\mathbf{\theta}}} H(\vec{\mathbf{x}})}_{1} 
< & 
4nu \exp(u_{h}) 
\frac{\pi^{2}}{6}
< 4nu \pi^{2} \exp(u_{h}) 
\\
\end{split}
\end{equation*}


The original expression of interest was:

\begin{equation*} 
\begin{split}
\norm{ \grad_{\vec{\mathbf{\theta}}} A( \vec{\mathbf{x}} )}_{1}  
\leq & 
\norm{\grad_{\vec{\mathbf{\theta}}} F(\vec{\mathbf{x}})}_{1}
+
\norm{\grad_{\vec{\mathbf{\theta}}} G(\vec{\mathbf{x}})}_{1}
+
\norm{\grad_{\vec{\mathbf{\theta}}} H(\vec{\mathbf{x}})}_{1}  \\
\norm{ \grad_{\vec{\mathbf{\theta}}} A( \vec{\mathbf{x}} )}_{1}  
< & 
4 n u
+
4nu \pi^{2} \exp(u_{g}) 
+
4nu \pi^{2} \exp(u_{h})   \\
\norm{ \grad_{\vec{\mathbf{\theta}}} A( \vec{\mathbf{x}} )}_{1}  
< & 
4 n u \pi^{2}
+
4nu \pi^{2} \exp(u_{g}) 
+
4nu \pi^{2} \exp(u_{h})   \\
\norm{ \grad_{\vec{\mathbf{\theta}}} A( \vec{\mathbf{x}} )}_{1}  
< & 
4 n u \pi^{2} 
\left( 1 +\exp(u_{g}) + \exp(u_{h})  \right) 
\\
\end{split}
\end{equation*}

Let the upper bound $U$ be given by:

\begin{equation*} 
U = 4 n u \pi^{2} \left( 1 +\exp(u_{g}) + \exp(u_{h})  \right)
\end{equation*}

From the definition of the norm $\norm{}_{1}$, and the trainable parameters $\theta_{i}$ with index set $\Theta$ one has that:

$$ 
\norm{ \grad_{\vec{\mathbf{\theta}}} A( \vec{\mathbf{x}} )}_{1} 
= \sum_{i \in \Theta}  \left| \frac{\partial A}{\partial \theta_{i}} ( \vec{\mathbf{x}} ) \right| 
$$

Finally, 

$$ 
\norm{ \grad_{\vec{\mathbf{\theta}}} A( \vec{\mathbf{x}} )}_{1} 
= \sum_{i \in \Theta}  \left| \frac{\partial A}{\partial \theta_{i}} ( \vec{\mathbf{x}} ) \right| 
< U
$$

\end{proof}

\begin{remark}
For a fixed number of variables $n$, the model has a total of $n2^{r+2}(2M+1)$ trainable parameters. The factor of $k^{-2}$ inside the expression for Atlas is necessary to ensure the sum is convergent in the limit of infinitely many exponential terms $M \to \infty$. Only the maximum density ($\rho=r$) cubic B-spline function has trainable parameters, so that the gradient vector is bounded in the limit of arbitrarily large densities $r \to \infty$. It is worth recalling that at most four basis functions are active for uniform cubic B-spline functions, regardless of the density, but the smaller densities cannot be trainable, otherwise this property does not hold. The gradient vector has bounded norm for any number of basis functions and exponential terms. The bounded gradient vector implies that Atlas is numerically stable during training, regardless of its size or parameter count.
\end{remark}

\subsubsection{Atlas distal orthogonality}

\begin{property}[Distal orthogonality]
For any Atlas model $A(\vec{\mathbf{x}})$ and $\forall \; \vec{\mathbf{x}},\vec{\mathbf{y}} \in D(A) \subset R^{n}$ and trainable parameters $\theta_{i}$, there exists a $\delta>0$ such that:

$$
 \min_{j=1, \dots , n }
\{ |x_{j} - y_{j}| \} > \delta 
\implies  
\langle
\grad_{\vec{\mathbf{\theta}}} A(\vec{\mathbf{x}}) 
, 
\grad_{\vec{\mathbf{\theta}}} A(\vec{\mathbf{y}})
\rangle
= 0
$$
\end{property}

\begin{proof} Let $A( \vec{\mathbf{x}} )$ denote some Atlas model, with mixed-density B-spline functions $f_{j}(x_{j})$, $g_{i,j}(x_{j})$, and $h_{i,j}(x_{j})$ in the form:

\begin{equation*} 
\begin{split}
A(\vec{\mathbf{x}}) 
= & \sum_{j=1}^{n} f_{j}(x_{j}) + 
\sum_{k = 1}^{M} \frac{1}{k^{2}} \exp( \Sigma_{j=1}^{n} g_{k,j}(x_{j})) 
                - \frac{1}{k^{2}} \exp( \Sigma_{j=1}^{n} h_{k,j}(x_{j})) \\
= & F(\vec{\mathbf{x}}) + 
\sum_{k = 1}^{M} \frac{1}{k^{2}} \exp(G_{k}(\vec{\mathbf{x}})) 
                - \frac{1}{k^{2}} \exp(H_{k}(\vec{\mathbf{x}})) \\   
= & F(\vec{\mathbf{x}}) +  G(\vec{\mathbf{x}})  
                - H(\vec{\mathbf{x}}) \\ 
\end{split}
\end{equation*}

With each mixed-density B-spline function in $A( \vec{\mathbf{x}} )$ given by:

\begin{equation*}
    \begin{aligned}
f_{j}(x_{j}) 
&= \sum_{\rho=0}^{r} \sum_{i=1}^{ 2^{\rho+2}} \theta_{f,(\rho,i,j)} S_{\rho,i}(x_{j}) \\
g_{k,j}(x_{j})) 
&= \sum_{\rho=0}^{r} \sum_{i=1}^{ 2^{\rho+2}} \theta_{g,(\rho,i,k,j)} S_{\rho,i}(x_{j}) \\
h_{k,j}(x_{j})) 
&= \sum_{\rho=0}^{r} \sum_{i=1}^{ 2^{\rho+2}} \theta_{h,(\rho,i,k,j)} S_{\rho,i}(x_{j}) \\
    \end{aligned}
\end{equation*}

Any mixed-density functions $\Phi$ and $\Psi$ that act on different components of the input must have orthogonal parameter gradients, since each input variable has its own associated parameters:

$$
\langle
\grad_{\vec{\mathbf{\theta}}} \Phi(x_{i}) 
, 
\grad_{\vec{\mathbf{\theta}}} \Psi(y_{j})
\rangle
= 0
\; \forall \; i \neq j
$$

Generally, since all mixed-density functions have parameters that are independent of each other it follows that for any mixed-density B-splines $\Phi$ and $\Psi$:

$$
\langle
\grad_{\vec{\mathbf{\theta}}} \Phi(x_{j}) 
, 
\grad_{\vec{\mathbf{\theta}}} \Psi(y_{j})
\rangle
= 0
\; \forall \; \Phi \neq \Psi
$$

Thus, one need only compare the parameter gradients of each mixed-density B-spline function $\Psi$ with itself. The inner-product of the parameter gradient of $\Psi$ evaluated on two different inputs is given by:

$$
\langle
\grad_{\vec{\mathbf{\theta}}} \Psi(x_{j}) 
, 
\grad_{\vec{\mathbf{\theta}}} \Psi(y_{j})
\rangle
$$

The inner-product given above is not zero in general. However, as illustrated in Figure~\ref{fig:fig_distal_orhogonality}, for any mixed-density B-spline function $\Psi$ there exist a $\delta>0$, such that:

$$
|x_{j} - y_{j}| > \delta 
\implies  \langle \grad_{\vec{\mathbf{\theta}}} \Psi(x_{j}),\grad_{\vec{\mathbf{\theta}}} \Psi(y_{j}) \rangle = 0 
$$

\begin{figure}[!h]
\centering
\includegraphics[width=0.55\linewidth]{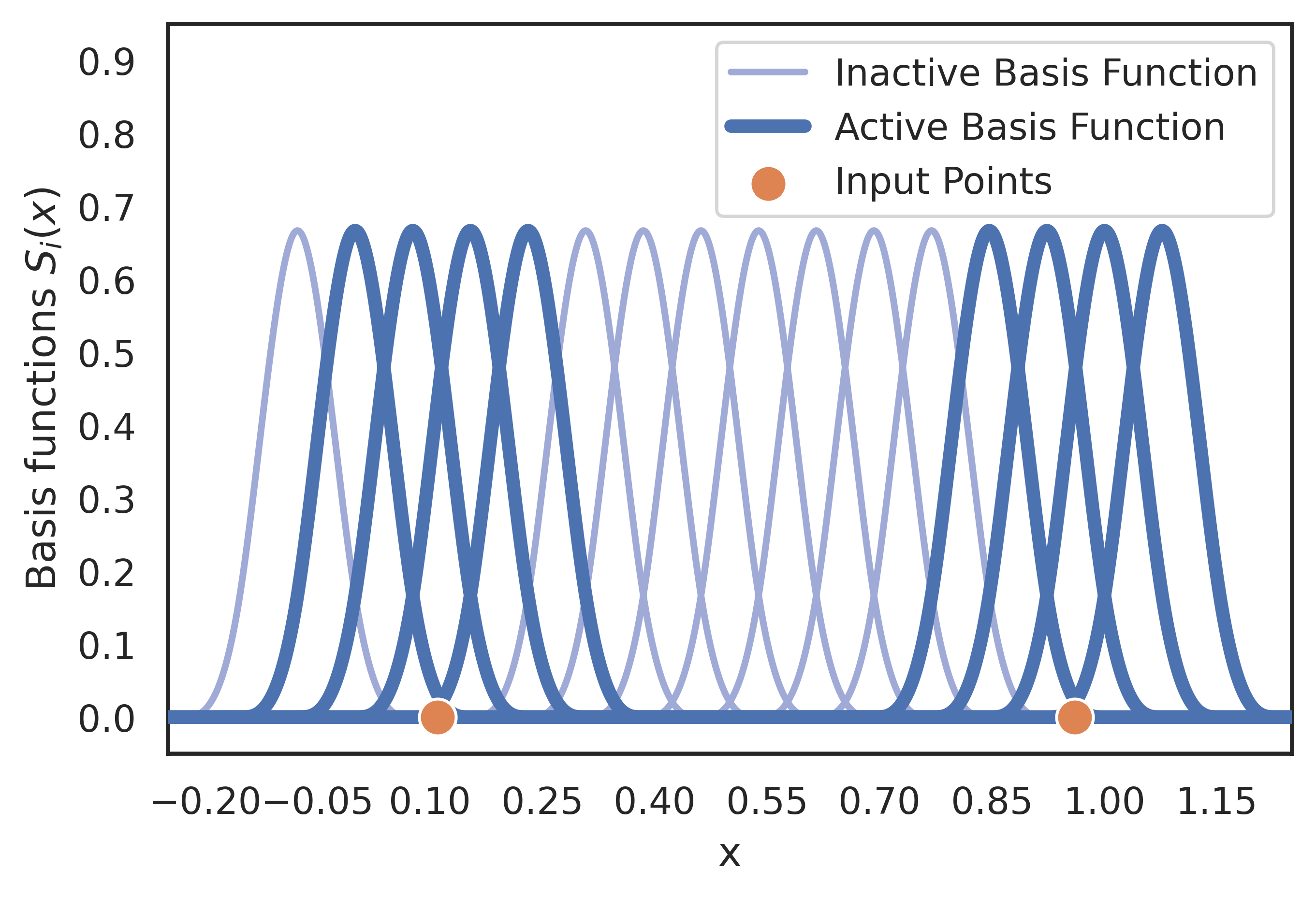}
\caption{Visual proof of distal orthogonality for single-variable $\rho$-density B-splines.}
\label{fig:fig_distal_orhogonality}
\end{figure}

This is because each basis function is zero everywhere, except on some small sub-interval. If this is true for all $j=1,...,n$, then the parameter gradients evaluated at $\vec{\mathbf{x}}$ and $\vec{\mathbf{y}}$ must be orthogonal. If this is true for all $j=1,...,n$, then it is true for the minimum. The converse is true by transitivity such that:

$$ 
|x_{j} - y_{j}|  > \delta \; \forall j=1,...,n 
\iff 
\min_{j=1, \dots , n }
\{ |x_{j} - y_{j}| \} > \delta
$$

Finally, 

$$
 \min_{j=1, \dots , n }
\{ |x_{j} - y_{j}| \} > \delta 
\implies  
\langle
\grad_{\vec{\mathbf{\theta}}} A(\vec{\mathbf{x}}) 
, 
\grad_{\vec{\mathbf{\theta}}} A(\vec{\mathbf{y}})
\rangle
= 0
$$

\end{proof}

\begin{remark}
Two points that sufficiently differ in each input variable have orthogonal parameter gradients. It is worth mentioning that the condition resembles a cross-like region in two variables, and planes that intersect in higher dimensions. Distal orthogonality means Atlas is reasonably robust to catastrophic forgetting.
\end{remark}

\end{document}